\newcommand{\de}{\delta}
\let\th\relax \newcommand{\th}{\theta}
\newcommand{\ka}{\kappa}
\newcommand{\rr}{\mathcal{R}}
\newcommand{\xx}{\mathcal{X}}
\newcommand{\nn}{\mathcal{N}}
\newcommand{\ra}[1]{\renewcommand{\arraystretch}{#1}}
\DeclareMathOperator{\spn}{span}
\newcommand{\oS}{S}
\newcommand{\oSt}{\ti{S}}
\newcommand{\oT}{T}
\newcommand{\oP}{P}
\newcommand{\oM}{M}
\newcommand{\oI}{I}
\newcommand{\h}[1]{\widehat{#1}}
\newcommand{\ol}[1]{\overline{#1}}
\newcommand{\ti}[1]{\widetilde{#1}}
\theoremstyle{definition}
\newtheorem{dfn}{Definition}
\theoremstyle{remark}
\theoremstyle{plain}
\newtheorem{prop}[dfn]{Proposition}
\newtheorem*{prop*}{Proposition}
\newtheorem{thm}[dfn]{Theorem}
\newtheorem*{thm*}{Theorem}
\newcommand{\eps}{\varepsilon}
\renewcommand{\epsilon}{\varepsilon}
\renewcommand{\hat}{\widehat}
\renewcommand{\bar}{\overline}
\def\:#1{\protect \ifmmode {\mathbf{#1}} \else {\textbf{#1}} \fi}
\newcommand{\X}{\mathcal{X}}
\newcommand{\Y}{\mathcal{Y}}
\newcommand{\wh}[1]{\widehat{#1}}
\newcounter{cnt-lem-quad-variation}
\newcommand{\argmin}[1]{\mathop{\operatorname{argmin}}_{#1}}
\renewcommand{\epsilon}{\varepsilon}
\newcommand{\norm}[1]{\left\Vert #1 \right\Vert}
\DeclareMathOperator*{\Tr}{Tr}
\newcommand{\F}{\mathcal{F}}
\newcommand{\hh}{\mathcal{H}}
\newcommand{\scal}[2]{\left\langle{#1},{#2}\right\rangle}
\newcommand{\R}{\mathbb{R}}
\newcommand{\N}{\mathbb{N}}
\newcommand{\la}{\lambda}
\newtheorem{assumption}{Assumption}
\newtheorem{theorem}{Theorem}
\newtheorem{corollary}{Corollary}
\newtheorem{definition}{Definition}
\newtheorem{lemma}{Lemma}
\newtheorem{proposition}{Proposition}
\newtheorem{example}{Example}
\newcommand{\Nystrom}[1]{{Nystr\"om}}
\newcommand{\bd}{\begin{definition}}
\newcommand{\ed}{\end{definition}}
\newcommand{\bi}{\begin{itemize}}
\newcommand{\ei}{\end{itemize}}
\newcommand{\ba}{\begin{assumption}}
\newcommand{\ea}{\end{assumption}}
\newtheorem{remark}{Remark}
\newcommand{\br}{\begin{remark}}
\newcommand{\er}{\end{remark}}
\newcommand{\bex}{\begin{example}}
\newcommand{\eex}{\end{example}}
\newcommand{\bp}{\begin{proposition}}
\newcommand{\ep}{\end{proposition}}
\newcommand{\blm}{\begin{lemma}}
\newcommand{\elm}{\end{lemma}}
\newcommand{\bt}{\begin{theorem}}
\newcommand{\et}{\end{theorem}}
\newcommand{\bcor}{\begin{corollary}}
\newcommand{\ecor}{\end{corollary}}	
\newcommand{\be}{\begin{equation}}
\newcommand{\ee}{\end{equation}}
\newcommand{\bpr}{\begin{proof}}
\newcommand{\epr}{\end{proof}}
\newcommand{\Kq}{{ K_\nq}}
\newcommand{\Km}{ K_{m_q}}
\newcommand{\Knm}{ K_{n_qm_q}}
\newcommand{\fpk}{\bar{f}_t}
\newcommand{\mq}{{m_q}}
\newcommand{\nq}{{n_q}}
\newcommand{\sumin}{{\sum_{i=1}^n}}
\newcommand{\sumqQ}{{\sum_{q=1}^Q}}
\title{\bf ParK: Sound and Efficient Kernel Ridge Regression\\by Feature Space Partitions}
\date{}
\newcommand{\printfnsymbol}[1]{\textsuperscript{\@fnsymbol{#1}}}
\author{{\bf Luigi Carratino}\thanks{equal contribution} \hfill \hspace{12em}\small \texttt{luigi.carratino@dibris.unige.it}\\
  {\small \it MaLGa - DIBRIS, University of Genova, Italy \hfill \hspace{12em}}\\ 
  \and
 {\bf Stefano Vigogna}\printfnsymbol{1} \hfill \hspace{14.5em}{\small\texttt{vigogna@dibris.unige.it}}\\
 {\small \it MaLGa - DIBRIS, University of Genova, Italy \hfill \hspace{12em}}\\ 
  \and
  {\bf Daniele Calandriello} \hfill \hspace{12.5em}{\small\texttt{dcalandriello@google.com}}\\
  {\small \it DeepMind Paris, France \hfill \hspace{12em}}\\   
  \and
  {\bf Lorenzo Rosasco} \hfill \hspace{14.5em}{\small\texttt{lorenzo.rosasco@unige.it}}\\
  {\small \it MaLGa - DIBRIS, University of Genova, Italy  \hfill \hspace{12em}}\\
  {\small \it Istituto Italiano di Tecnologia, Genova, Italy \hfill \hspace{12em}}\\
  {\small \it CBMM - MIT, Cambridge, MA, USA \hfill \hspace{12em}}
}
\begin{document}

\maketitle

\begin{abstract}
We introduce ParK, a new large-scale solver for kernel ridge regression.
 Our approach combines partitioning with random projections and iterative optimization
 to reduce space and time complexity
 while provably maintaining the same statistical accuracy.
 In particular, constructing suitable
 partitions directly in the feature space
 rather than in the input space,
 we promote orthogonality between the local estimators,
 thus ensuring that key quantities such as
 local effective dimension and bias remain under control.
 We characterize the statistical-computational tradeoff of our model,
 and demonstrate the effectiveness of our method
 by numerical experiments on large-scale datasets.
 
\end{abstract}

\section{Introduction}

The development of provably accurate and efficient algorithms for learning is key to tackle modern large-scale applications.
Kernel methods \cite{scholkopf2002learning,steinwart2008support} provide a natural ground to develop this research direction.
On the one hand they have sound statistical guarantees \cite{caponnetto2007optimal,steinwart2008support,steinwart2009optimal},
but on the other hand their basic implementations are limited to sample size of only a few tens of thousands of points \cite[Chapter 11]{steinwart2008support}.
Recent years have witnessed a growing literature introducing  algorithmic solutions to improve efficiency,
but also theoretical guarantees that quantify how accuracy is affected.

We next recall a few lines of work relevant to our study.
A first line of work is based on exploiting ideas from optimization and numerical analysis.
This includes for example gradient methods \cite{MR2327601},
as well as their accelerated \cite{BAUER200752},
stochastic  \cite{dieuleveut2016nonparametric},
preconditioned \cite{gonen2016solving}
and distributed \cite{NEURIPS2019_df263d99} variants.
A second line of work is based on using randomized approaches to reduce the size of the problem to be solved.
This includes Nystr\"om approximations \cite{williams2001using},
random features \cite{rahimi2008random}
and more generally sketching methods \cite{avron2013sketching}.
The theoretical properties of these methods have been recently characterized in terms of sharp statistical bounds \cite{rudi2015less,rudi2017rf}.
Finally, a third line of work considers different partitioning strategies to divide the estimation step in smaller subproblems.
This approach is based on splitting the input space in regions where local estimators are defined \cite{JMLR:v17:14-023,pmlr-v54-thomann17a,tandon2016,DUMPERT201896,pmlr-v89-muecke19a,blaschzyk2019improved}.
In this context, the emphasis is typically on allowing the estimation of larger classes of functions.
Another form of partitioning, called divide-and-conquer,
is instead based on randomly splitting the training data to then obtain a global estimator by averaging \cite{JMLR:v16:zhang15d,JMLR:v18:15-586,Guo_2017}.
In this approach, the focus is primarily on computational saving.
Notably, a number of works have considered combinations of these ideas, see for example \cite{camoriano2016nytro,rudi2017falkon,carratino2018learning,pmlr-v89-muecke19a,LIN2021}.

In this paper we propose and study a local kernel algorithm, called ParK,
combining partitioning with iterative optimization and sketching.
Our goal is to provide an efficient and accurate approximation to a global kernel ridge regression estimator.
The main novelty in ParK is in the form of the considered partition, that we define in the feature space,
rather than in the input space as in traditional partitioning methods.
This allows to promote orthogonality between the local estimators,
and thus to control the local effective dimension and the local bias.
Given a partition,  local kernel ridge estimators are computed  using sketching and preconditioned conjugate gradient iterations \cite{rudi2017falkon}.
From a theoretical point of view,
our main contribution is characterizing the statistical properties of ParK, in terms of conditions on the partition and the choice of the hyper-parameters.
Borrowing ideas from subspace clustering \cite{6482137}, we show that the minimal angle between suitable subspaces induced by the partition plays a crucial role.
Indeed, our analysis shows that, if such an angle is sufficiently large,
ParK can achieve the same accuracy as global kernel ridge regression estimators,
with only a fraction of computations.
Our theoretical results are complemented  with numerical experiments on very large datasets,
which show that ParK can indeed provide excellent performances,
on par and often better than the best available large-scale kernel methods.

The rest of the paper is organized as follows.
In Section \ref{sec:background} we state the problem and recall the basics of kernel ridge regression.
In Section \ref{sec:algorithm} we illustrate our algorithm.
In Section \ref{sec:theory} we analyze the prediction error of our method.
In Section \ref{sec:experiments} we present the results of our numerical experiments.
In Section \ref{sec:conclusions} we draw some conclusions
and report the main limitations of our work.
Additional proofs and details are collected in Appendix 
\ref{sec:appendix}. \section{Background} \label{sec:background}
Let $(x_i,y_i)$ with $i \in [n]=\{1,\dots,n\}$ be $n$ pairs of points in $\X \times \Y$, where $\X \subseteq \R^d$ with $d \in \N$ and $\Y \subseteq \R$.
We assume the relation between input points $x_i$ and output points $y_i$ to be determined by 
the noisy evaluations of an unknown function $f_* : \X \rightarrow \Y$ as
\begin{equation} \label{eq:reg}
    y_i = f_*(x_i) + \eps_i \qquad i \in [n] .
\end{equation}
Based on the samples $(x_i,y_i)$, we want to estimate the function $f_*$,
searching for solutions in a suitable hypothesis space $\hh$ as detailed below.

Let $\hh$ be a reproducing kernel Hilbert space (RKHS),
that is,
a Hilbert space of functions with inner product $\scal{\cdot}{\cdot}_\hh$
and symmetric positive definite kernel $K : \X \times \X \rightarrow \R$
such that 
$ K_x = K(x,\cdot) \in \hh$
and
$f(x) = \scal{f}{K_x}_\hh$ for all $f \in \hh, x \in \X$.
We recall that, for every RKHS $\hh$,
there exist a Hilbert feature space $\F$
and a feature map $\phi: \X \rightarrow \F$
such that
$K(x, x') = \scal{\phi(x)}{\phi(x')}_\F$ for all $x,x' \in \X$.
The feature map is not unique;
in particular, one may take,
as we do in all that follows,
$ \F = \hh $ and $ \phi(x) = K_x $,
in which case $ \hh = \overline{\spn} \ \phi(\xx) $,
where $ \phi(\xx) = \{ \phi(x) : x \in \xx \} $.

Kernel ridge regression (KRR) corresponds to minimizing
\begin{equation} \label{eq:reg_ls}
    \min_{f \in \hh}\frac{1}{n}\sum_{i=1}^n \left|f(x_i) - y_i\right|^2 + \lambda \norm{f}_\hh^2,
\end{equation}
where $\la > 0$ and $ \norm{f}_\hh^2 = \langle f , f \rangle_\hh $.
By the representer theorem \cite{scholkopf2002learning},
the (unique) solution to problem \eqref{eq:reg_ls} can be written as
\begin{equation}\label{eq:krr_est}
    \h{f}_\la(x) = \sumin {\alpha}_i K(x_i, x) , \qquad  \alpha = \left(K_n + \la n I\right)^{-1}Y ,
\end{equation}
where $ \alpha = [ \alpha_1 , \dots , \alpha_n ]^\top $, $Y = [y_1, \dots, y_n]^\top \in \R^n$,
and $K_n \in \R^{n\times n}$ is the kernel matrix defined by
$(K_n)_{i,j} = K(x_i, x_j)$ for $i,j \in [n]$.
As a consequence, the estimator \eqref{eq:krr_est} can be derived
restricting the minimization problem \eqref{eq:reg_ls}
to the finite-dimensional subspace
$ \hh_n = \spn \{ \phi(x_i) ~|~ i \in [n] \} $.
Computing \eqref{eq:krr_est} for large $n$ is prohibitively expensive,
as space and time complexities are, respectively, $O(n^2)$ and $O(n^3)$.
The goal of this paper is to provide an algorithm
to compute an efficient approximation to \eqref{eq:krr_est}.

 \section{Algorithm} \label{sec:algorithm}

Our method combines diverse techniques, 
including partitioning, sketching and preconditioned iterative optimization.
We begin focusing on partitioning.
While classical partitioning methods
construct partitions in the input space,
the main novelty of our approach is
that we construct partitions in the feature space.
Note that,
in the case of a universal kernel on a compact input space,
every feature map is injective \cite[Lemma 4.55]{steinwart2008support},
hence every partition of the input space
defines a corresponding partition of the feature space.
Thus, we may see our approach
as a generalization of classical input space partitioning approaches.
As will become apparent from our analysis,
the performance of a partitioned kernel estimator
depends crucially on two main quantities:
the local biases and the local effective dimensions.
Since both quantities are strictly related to the RKHS of choice,
constructing partitions in feature space
allows for a more direct control.
In particular, promoting orthogonality in the RKHS metric
will generate feature space partitions
which tend to minimize both the local biases and the local effective dimensions.
In the next section we start discussing how such partitions can be defined.

\subsection{Learning on feature space partitions} \label{sec:partitions}

For $Q \in \N$, we define a partition of $ \phi(\xx) $
as a family $ \{ V_q \}_{q\in[Q]} $ of subsets $V_q \subseteq \phi(\xx)$ such that
\begin{equation} \label{eq:partition_def}    
     \phi(\xx) = \bigcup_{q=1}^Q V_q \qquad V_q \cap V_k = \varnothing \quad q \ne k .
\end{equation}

The partition \eqref{eq:partition_def} induces a local subsampling of the training set
and a local hypothesis space.
Namely, we define
\begin{align*}[n]_q = \{ i \in [n] : \phi(x_i) \in V_q \} , \qquad
    \hh_q= \overline{\spn} \{ V_q \} .
\end{align*}
Also, we denote by $ n_q = \#[n]_q $ the local subsampling rate.

\textbf{Voronoi partitions.}
Notice that so far $V_q$ is an arbitrary subset of $\hh$, and therefore computing the set $[n]_q$ can be arbitrarily difficult
(e.g., $V_q$ could be defined using an infinite number of constraints and be non-computable).
For this reason, although our theoretical analysis holds for any partition defined as in \eqref{eq:partition_def}, 
we focus on the special case of Voronoi partitions, where the subsets (also called cells) are induced by a set of $Q$ centroids $\{\phi(c_q)\}_{q=1}^Q$ with $c_q \in \X$ points in the input space.
Then, each cell $V_q$ is uniquely defined as
$$
V_q = \{\phi(x): q = \arg\min_{k\in[Q]} \norm{\phi(x) - \phi(c_k)}_\hh^2\} ,
$$
with ties broken arbitrarily (e.g., by assigning the point to the cell with the smaller $q$).
It is now possible to identify the set of indices $[n]_q$ using the RKHS distance
\begin{equation} \label{eq:rkhs-dist}
    \norm{\phi(x)-\phi(x')}_\hh^2 = K(x, x) + K(x', x') - 2K(x, x') ,
\end{equation}
computing the distance to each centroid and taking the minimum.

We remark that our approach based on directly partitioning the feature space has quite different implications compared to
previous approaches that partition the input space.
For example, a Voronoi partition of the feature space
is very different from a Voronoi partition
of the input space, since the pre-image $\{x\in\xx: \phi(x) \in V_q\}$
does not need to follow any Voronoi shape.
Moving from input to feature space partitions also opens new
computational challenges. For example, we choose to explicitly
represent the cell centroid as $\phi(c_q)$ so that computing
the distance and the assignment of each point to a centroid is a
$O(1)$ operation. If instead we chose a more complex centroid,
such as a cluster barycenter generated by kernel $k$-means,
or an eigenvector computed by kernel PCA, this complexity
might be much larger. As an example, the barycenter of a cluster
of $m$ points in $\hh$ might not correspond to any single point
in $\X$, and therefore cannot be explicitely represented, but only
implicitly as an average of $m$ points in $\hh$. Therefore,
computing a distance to such a centroid would be an $O(m)$
operation rather than a $O(1)$.
These and more subtle pitfalls appear only when we consider
the more flexible framework of feature space partitions.

\paragraph{Minimal principal angle.}
Focusing on partitions of Voronoi type,
constructing a good partition is equivalent to choosing a set of centroids
that preserve the learning accuracy as much as possible.
As we rigorously show in \Cref{sec:theory},
this can be guaranteed by choosing centroids that
maximize the minimal principal angle between subspaces.
This quantity frequently appears in the analysis of subspace clustering \cite{6482137},
and will be important for us to control both the bias and the variance of our estimator.
The first principal angle between two linear subspaces $U$ and $W$ of a Hilbert space of inner product 
$ \langle \cdot , \cdot \rangle $ and norm $ \| \cdot \| $ is defined as
\begin{align*}
 \angle( U , W ) = \min \{ \arccos( \langle u , w \rangle ) : u \in U , w \in W , \|u\| = \|v\| = 1 \} .
\end{align*}
We call $\th$ the minimal first principal angle between the subspaces $\hh_q$, that is,
\begin{align} \label{eq:angle}
 \th = \min_{q \ne k} \angle ( \hh_q , \hh_k ).
\end{align}
Once again, for computational reasons we cannot use direct
optimization of this quantity in $\hh$ as our objective, since the optimal centroid placement might be impossible to express using
points from $\X$. Instead,
to promote large principal angles and obtain centroids that are
computationally easy to handle, 
we consider the following greedy iterative procedure.
Let $ X = \{ x_i : i \in [n] \} $. Then
\begin{align} \label{eq:greedy}
 c_1 = \arg\max_{c \in X} K(c,c) , \qquad
 c_{q+1} = \arg\max_{c \in X \setminus \{c_1, \dots c_q\}} \operatorname{SC}_q(c) ,
\end{align}
where
\begin{align*}
\operatorname{SC}_q(c) = K(c, c) - [K(c, c_1), \dots, K(c, c_q)]^\top K_q^{-1}[K(c, c_1), \dots, K(c, c_q)]
\end{align*}
is the Schur complement of a new candidate centroid $c$ with respect to the $q$ already selected centroids $\{c_1,\dots,c_q\}$,
and $ K_q \in \R^{q \times q} $ is defined by $ (K_q)_{i,j} = K(c_i,c_j) $.
Note that the inversion of $K_q$
can be efficiently computed using rank-$1$ updates.
This strategy has been originally proposed by \cite{NEURIPS2018_dbbf603f}
with the goal of maximizing the volume spanned by the points in the
feature space, which is achieved when the angle between all points
selected is large as required by our condition.
Crucially, it is also easy to apply to RKHS's,
since computing Schur complements involves only inner products.
Beyond promoting large volume and orthogonality, the Schur complement also has important links with uncertainty estimation and spectral approximation. In particular, $\operatorname{SC}_q(c)$ is also equivalent to the posterior variance of $c$ in a Gaussian process \cite{RasmussenW06}, and to the leverage score of $c$ w.r.t.~the already selected point in the context of randomized linear algebra \cite{Mahoney697}.

\subsection{Learning local KRR estimators by sketched preconditioned conjugate gradient}

For each cell of a partition, a local estimator $\h{f}_q$ can be defined as the solution to the local KRR problem
\begin{equation}\label{eq:reg_part_ls}
    \min_{f \in \hh_q}\frac{1}{\nq}\sum_{i\in[n]_q} \left|f(x_i) - y_i \right|^2 + \lambda_q \norm{f}_\hh^2
\end{equation}
with $\la_q > 0$.
Given the local estimators $\h{f}_q$,
we then define a global estimator $\bar{f}$ by
\begin{equation} \label{eq:barf}
    \bar{f}(x) = \h{f}_q(x) \qquad \text{if } \phi(x) \in V_q .
\end{equation}
Note that the evaluation of the global estimator at a point
needs only one local estimator.

Guidance on how to pick the values $\la_q$ in \eqref{eq:reg_part_ls}
will follow from our theoretical analysis.
Meanwhile, we focus on how to efficiently solve the minimization problems \eqref{eq:reg_part_ls}.
Let $X_q = \{x_i \in X : i \in [n]_q \} \in \R^{\nq \times d}$ and $Y_q = \{y_i \in Y : i \in [n]_q \} \in \R^{\nq}$ 
be the local subsets of input/output points, and
let $\Kq \in \R^{\nq \times \nq}$ be the local kernel matrix with entries $(\Kq)_{i,j} = K(x_i, x_j)$ for $ i,j \in [n]_q $.
Following the same ideas to derive \eqref{eq:krr_est},
one could compute $\h{f}_q$ by
\begin{equation} \label{eq:krr_part_est}
    \h{f}_q(x) = \sum_{i\in[n]_q} {(\alpha_q)}_i K(x_i, x), \qquad \alpha_q = \left(\Kq + \la_q n_q I\right)^{-1}Y_q .
\end{equation}
This would already result in a smaller computational burden compared to the vanilla KRR estimator \eqref{eq:krr_est}:
the space and time complexities are now $O(\max_{q\in[Q]} \nq^2)$ and $O(\sum_{q\in[Q]} \nq^3)$,
potentially with $\nq \ll n$.
Moreover, an additional saving in time can be obtained by distributing each task \eqref{eq:krr_part_est}
over $Q$ different machines,
leading to $O(\max_{q\in[Q]} \nq^3)$ time complexity.
However, the scaling in $\nq$ is still quadratic and cubical.
To improve these dependencies,
we solve \eqref{eq:krr_part_est} only approximately,
using the FALKON algorithm proposed in \cite{rudi2017falkon}.
To this end, we first need to introduce several key ingredients.
While the following constructions hold in general for any set of points,
here we adapt them to the partition setting outlined in the previous section.

\textbf{Local Nystr\"om subsampling.}
For each $ q \in [Q] $, we consider a subset of $ m_q \le n_q $ points
\begin{equation}\label{eq:Xqt_def}
    \ti{X}_q = \{\ti{x}_{q,1}, \dots, \ti{x}_{q,\mq} \} \subseteq X_q
\end{equation}
sampled uniformly at random from $X_q$.
We then define $\Km \in \R^{\mq \times \mq}$ by $(\Km)_{i,j} = K(\ti{x}_{q,i}, \ti{x}_{q,j})$ for $ i , j \in [m_q] $,
and $\Knm \in \R^{\nq \times \mq}$ by $(\Knm)_{i,j} = K({x}_i, \ti{x}_{q,j})$ for $ i \in [n]_q , j \in [m_q]$.

\textbf{Local Preconditioner.}
For each $q \in [Q]$, we define the local (sketched) preconditioner $B_q \in \R^{\mq \times \mq}$ as
\begin{equation*}
    B_q B_q^\top = (\frac{\nq}{\mq} \Km^2 + \la_q \nq \Km)^{-1}.
\end{equation*}

\textbf{Conjugate gradient descent.}
We let $ \ti{\beta}_{q,t} \in \R^\mq $ be the $t$-th iteration of conjugate gradient minimizing
\begin{align}\label{eq:park_local_min_problem}
	 \mathcal{L}_q(\beta) = \frac{1}{n_q}\|\Knm B_q \beta -Y_q\|^2 + \la_q \beta^\top (B_q^{\top} \Km B_q) \beta.
\end{align}

Finally, we define the local FALKON estimator
\begin{equation}\label{eq:local_falkon_est}
    \ti{f}_{q,t}(x) =  \sum_{i =1}^\mq (B_q\ti{\beta}_{q,t})_i K(\ti{x}_i, x) \qquad q \in [Q] .
\end{equation}

\subsection{ParK}\label{sec:alg_subsec}
We are now ready to present ParK.
Let $(\phi(c_q))_{q=1}^Q$ with $c_q \in X$ be the centroids of the cells $(V_q)_{q=1}^Q$ selected greedily according to \eqref{eq:greedy}.
We define the ParK estimator as 
\begin{equation}\label{eq:park-estimator}
    \bar{f}_t(x) = \ti{f}_{q,t}(x) \qquad \text{if } \phi(x) \in V_q .
\end{equation}
The algorithm to train the above estimator (see Algorithm~\ref{alg:Park-train}) consists of three main parts. The first one greedily identifies the representative points $(c_q)_{q=1}^Q$ such that $(\phi(c_q))_{q=1}^Q$ are the centroids of the cells; the second one identifies the subsets of points $X_q, Y_q$ associated to each cell; the third one uses the FALKON algorithm to solve the local minimization problem \eqref{eq:park_local_min_problem} for each $X_q, Y_q$ with $q \in [Q]$, thus deriving the $Q$ local estimators \eqref{eq:local_falkon_est}.
At prediction time, the algorithm first identifies to which cell the test point belongs, and then proceeds using the local estimator of the selected cell to predict the output (see Algorithm~\ref{alg:Park-predict}).
Note that the RKHS distances
in llne 4 of Algorithm~\ref{alg:Park-train}
and line 1 of Algorithm~\ref{alg:Park-predict})
are computed using the polarization identity \eqref{eq:rkhs-dist}.
\begin{algorithm}[ht]
\setstretch{1.25}
\begin{algorithmic}[1]
\REQUIRE{Training set $X = (x_i)_{i=1}^n\in \R^{n\times d}, Y = (y_i)_{i=1}^n \in \R^{n}$, numbers of local \Nystrom{} centers $\{m_q\}_{q=1}^Q \in \N^Q$, local regularization parameters $\{\la_q\}_{q=1}^Q \in \R_+^Q$, number of local iterations $\{t_q\}_{q=1}^Q \in \N^Q$ }.
\STATE Initialize $[n]_q = \{\}$ for all $q \in [Q]$
\STATE Greedily select $(\phi(c_q))_{q=1}^Q$ according to \eqref{eq:greedy}
\FOR{$i = 1,\dots, n$}
    \STATE Compute $\bar{q} = \argmin{q\in [Q]} \norm{\phi(x_i) - \phi(c_q)}_\hh^2$
    \STATE Update $[n]_{\bar{q}} = [n]_{\bar{q}} \cup\{i\}$
\ENDFOR
\FOR{$q = 1,\dots, Q$}
    \STATE Select $X_q = \{x_i \in X : i \in [n]_q \}$ and $Y_q = \{y_i \in Y : i \in [n]_q \}$
    \STATE Compute $\ti{f}_{q, t_q}$ as in eq.~\eqref{eq:local_falkon_est} using $X_q, Y_q$
\ENDFOR
\STATE Collect the local estimators $\ti{f}_{q, t_q}$ and return the ParK estimator $\bar{f}_t$ as in eq.~\eqref{eq:park-estimator} 
\end{algorithmic}
\caption{
ParK: Train\label{alg:Park-train}
}
\end{algorithm}

\begin{algorithm}[ht]
\begin{algorithmic}[1]
\REQUIRE{Test point $x \in \R^{d}$, local estimators $\{\ti{f}_{q,t}\}_{q=1}^Q$}, representatives of partition $C = (c_q)_{q=1}^Q$
\STATE Select $\bar{q} = \argmin{q\in [Q]} \norm{\phi(x) - \phi(c_q)}_\hh^2$
\STATE Evaluate $\ti{f}_{\bar{q}, t}(x)$
\end{algorithmic}
\caption{
ParK: Predict\label{alg:Park-predict}
}
\end{algorithm}

The time complexity of training ParK is $O(Q^2 n \log(n))$ to compute the centroids, $O(Q n)$ to compute the indices $[n]_q$, and $O(t_q \mq \nq)$ to compute each local estimator. Putting these quantities together we get $O(Q^2 n \log(n) + \sum_{q \in [Q]} t_q \mq \nq)$ in time, and $O(\max_{q\in[Q]} \mq^2)$ in space.
If we parallelize the training of the local estimators over $Q$ machines,
the time complexity further reduces to $ O(Q^2 n \log(n) + \max_{q\in[Q]} t_q m_q n_q ) $.
In many practical scenarios, we can think $Q$ as $O(1)$.
For example, in all our experiments we take $ Q = 32 $
(see \Cref{sec:experiments}).
We compare the complexity of several KRR solver in \Cref{tab:cost}.

\begin{table}[h]
\caption{Computational complexity of some KRR solvers (up to constants).
For D\&C and ParK,
we report the time complexity on $Q$ parallel machines
and the space requirement for each machine.}
\label{tab:cost}
\ra{1.2}
\centering
\resizebox{\columnwidth}{!}{\begin{tabular}{ l l l l l l l }
 \cline{2-7}
 & naive & iterative \cite{MR2327601} & Nystr\"om/RF \cite{williams2001using,rahimi2008random} & FALKON \cite{rudi2017falkon} & D\&C \cite{JMLR:v16:zhang15d} & ParK \\
 \hline
 space & $n^2$ & $n^2$ & $m^2$ & $m^2$ & $(n/Q)^2$ & $\max_{q} \mq^2$ \\
 time & $n^3$ & $tn^2$ & $m^2n$ & $tmn$ & $(n/Q)^3$ & $Q^2 n \log(n) + \max_{q} t_q m_q n_q$ \\
 test & $n$ & $n$ & $m$ & $m$ & $n$ & $ Q + \max_q m_q $ \\
 \hline
\end{tabular}
}
\end{table}

\subsection*{Space partitioning vs data splitting}

We conclude this section commenting on a different yet related distributed approach.
As briefly recalled in the introduction,
a straightforward way to decompose the KRR problem
is by a simple split of the training data.
For example, one can divide the samples uniformly at random into $Q$ disjoint subsets of cardinality $n_q=n/Q$.
Methods performing such a step are known as divide-and-conquer \cite{JMLR:v16:zhang15d,JMLR:v18:15-586,Guo_2017}.
Consisting essentially in a block diagonal approximation of the kernel matrix,
the resulting final estimator is an average of globally subsampled models.
Divide-and-conquer methods are appealing due to the extreme simplicity of the splitting procedure and the direct control of the subsampling rates $n_q$.
However,
they can suffer from worse approximation error (see discussion in \cite{tandon2016}),
and be expensive at test and evaluation time.
On the other hand, partitions present several potential benefits.
First, data splitting is a byproduct of a geometric partition.
This opens to the opportunity of exploiting the structure of the space, for instance enforcing notions of locality or orthogonality.
Consistently, the final estimator is a union of local estimators, as opposed to an average of global ones.
Hence, partitioning may enhance the approximation power of the model, capturing relevant local correlations \cite{JMLR:v17:14-023}.
As another consequence, at evaluation time only one local estimator, instead of the average of all estimators,
needs to be called, yielding further computational saving.
These nice properties have motivated a fruitful line of research, notably \cite{JMLR:v17:14-023,tandon2016,pmlr-v89-muecke19a},
where the advantage in the partitioning approach has been studied both in statistical and in computational terms.
In this paper we concentrate on the computational aspects, expanding on theoretical tradeoffs outlined in \cite{tandon2016,pmlr-v89-muecke19a}
and developing \cite{pmlr-v89-muecke19a} with new algorithmic ideas. \section{Theory} \label{sec:theory}

To simplify the analysis and better highlight the new ideas in play,
we consider the problem \eqref{eq:reg} in a fixed design setting \cite{bach13,hsu12}, 
where the $x_i$ are deterministic
and the $\eps_i$ are independent and identically distributed random variables.

Let $ L^2 = L^2(\rho)$ with $\rho = \frac{1}{n} \sum_{i=1}^n \delta_{x_i}$.
We may identify
$ L^2 $ with $\R^n$ endowed with the inner product $ \langle u , w \rangle_{L^2} = \frac{1}{n} u^\top w $.
We define the excess risk of an estimate $\h{f}$ of $f_*$ in problem \eqref{eq:reg} as
\begin{equation} \label{eq:excess-risk}
    \rr({\h{f}}) = \| \h{f} - f_* \|_{L^2}^2 = \frac{1}{n} \sum_{i=1}^n | \h{f}(x_i) - f_*(x_i) |^2 .
\end{equation}
We are interested in studying the performance of the estimator $\bar{f}_t$
defined in \eqref{eq:park-estimator}
given a partition \eqref{eq:partition_def},
as measured by the excess risk \eqref{eq:excess-risk}.
Our theory will suggest how to construct the partition
and tune the regularization
in order to get the best learning rate.

\subsection{Definitions and assumptions} \label{sec:assumptions}

We start by defining some relevant operators
in global and local variants. In view of \eqref{eq:krr_est} (and the fixed design setting), we assume without loss of generality that $ \hh = \hh_n $.
We define the covariance operator
$
\oT : \hh \to \hh
$
as
$ \oT = \frac{1}{n} \sum_{i\in[n]} \phi(x_i) \otimes \phi(x_i)
$,
where, for $ v , w \in \hh $,
$ v \otimes w $ denotes the operator
$ u \in \hh \mapsto \langle u , v \rangle_\hh w \in \hh $.
The operator $T$ is standard in the analysis of kernel methods \cite{caponnetto2007optimal}.
We now define the local version of the covariance operator
conditioned on the partitioning \eqref{eq:partition_def}.
Thanks to \eqref{eq:krr_part_est} (and the fixed design setting),
we can assume without loss of generality that
$ \hh_q = \spn\{ \phi(x_i) : i \in [n]_q \} $.
The local covariance operator
$
\oT_q : \hh \to \hh
$
is defined as
$
\oT_q = \frac{1}{n_q} \sum_{ i \in [n]_q } \phi(x_i) \otimes \phi(x_i)
$.
We denote with $\oP_q : \hh \to \hh$ the orthogonal projection onto the subspace $\hh_q$.
For all $q \in [Q]$, we let $ \rho_q = n_q / n $.
Recall that we denote by $\theta$
the minimal principal angle  between the subspaces $\hh_q$,
as defined in \eqref{eq:angle}.

To measure the capacity of the hypothesis spaces,
we will use the standard notion of effective dimension \cite{caponnetto2007optimal}.

\textbf{Effective dimension.}
The (global) effective dimension of the space $\hh$ is given by
\begin{equation*}
    \nn (\la) = \Tr ( ( \oT + \la )^{-1} \oT) \qquad \la > 0 .
\end{equation*}
Consistently, we define the local effective dimension of each space $\hh_q$ as
\begin{equation*}
    \nn_q (\la_q) = \Tr ( ( \oT_q + \la_q )^{-1} \oT_q) \qquad \la_q > 0 .
\end{equation*}
We also define the local maximal degrees of freedom \cite{bach13} as
$$
 \nn_{\infty,q} (\la_q) = \sup_{x \in X_q} \langle \phi(x) , (T_q + \la_q)^{-1} \phi(x) \rangle_\hh \qquad \la_q > 0 ,
$$
which gives the bound $ \nn_q (\la_q) \le \nn_{\infty, q} (\la_q) \le \la_q^{-1} \sup_x K(x,x) $.
The effective dimension is related to the spectrum decay of the covariance operator,
and thus it provides a way to quantify how many important eigenfunctions the RKHS contains.
In this sense, it serves as an implicit number of parameters for the nonparametric model represented by the RKHS.
The interplay between global and local effective dimensions,
hence between global and local model complexity,
will play a major role in our analysis.
Similarly, there exist an interplay between a local and a global version of the maximal degrees of freedom $\nn_{\infty}(\lambda)$, which is also connected to the coherence of the $T$ operator, and to the concept of maximal leverage score \cite{bach13}.

We will need a few basic assumptions.
\begin{assumption} \label{it:f*H}
$ f_* \in \hh $.
\end{assumption}
\begin{assumption} \label{it:ka}
$ \ka^2 = \sup_{x\in\xx} K(x,x) < \infty $.
\end{assumption}
\begin{assumption} \label{it:eps}
The noise variables $\eps_i$ are i.i.d. sub-Gaussian of variance proxy $ \sigma^2 < \infty $, $i \in [n] $.
\end{assumption}

Assumption \ref{it:eps} is standard in the analysis of any regression model.
In particular, sub-Gaussianity allows to control the tails of the noise,
and therefore to establish bounds in high probability.
Bounded and Gaussian noise are examples,
but any variable with sub-Gaussian tail is covered.
Assumptions \ref{it:f*H} and \ref{it:ka} are instead typical of kernel methods.
With \Cref{it:f*H}, we suppose that the RKHS is a well specified model.
We stick to \Cref{it:f*H} for simplicity,
but we could easily relax it
assuming the existence of a function in the RKHS
with same excess risk as $f_*$,
or considering the excess risk with respect to the best in class.
\Cref{it:ka} allows to provide explicit bound for kernel related quantities,
and ensures in particular that functions in the RKHS are bounded.

\subsection{Main results}

Our first proposition generalizes the classical bias-variance tradeoff of KRR estimators
incorporating iterative optimization, random projections and feature partitioning.
The result is a high probability bound for the excess risk
of our ParK estimator.

\begin{prop} \label{thm:main}
Let $ \de \in (0,1) $. 
Under the regression model \eqref{eq:reg} and the assumptions of Section \ref{sec:assumptions},
let $\bar{f}_{t_q}$ be the ParK estimator as defined in \eqref{eq:park-estimator}.
If for each $ q \in [Q] $, $0 < \la_q \leq \kappa^2$,
\begin{align*}
 \mq \geq{5[1 + 14 \nn_{\infty, q}(\la_q)]}\log(\frac{8\kappa^2}{\la_q\delta}),
 \qquad
 t_q \geq 2\log\left(4\sigma^2 \left(\norm{\oP_q f_*}_\hh^2\la_q\right)^{-1/2}\right),
 \end{align*}
then, with probability at least $1-4\delta$,
\begin{equation*}
    \rr ( \bar{f}_{t_q} )
    	\leq 16 \sum_{q=1}^Q \norm{P_q f_*}_\hh^2\la_q \rho_q + \sigma^2 \sum_{q=1}^Q\frac{\nn_q(\la_q) + \sqrt{\nn_q(\la_q)\log(1/\delta)} + 2 \log(1/\delta)}{n} .
\end{equation*}
\end{prop}
The proof of \Cref{thm:main} is given in Appendix \ref{appx:risk}.
The bound consists of a bias and a variance term.
The bias term is an average of local biases,
measured by the projection of the target function
onto the local hypothesis spaces,
regularized by a local penalization.
The variance term is essentially the ratio between
the sum of local effective dimensions
and the global sample size.
We are going to control bias and variance in the next two propositions,
whose proof is postponed to Appendix \ref{appx:partition}.
For the bias, we prove the following generalized Bessel inequality.

\begin{prop} \label{prop:bias}
With the definitions of Section \ref{sec:assumptions}, we have
$$
\sum_{q=1}^Q \| \oP_q f_* \|_\hh^2 \leq (1 + Q^2\cos(\theta))\|f_* \|_\hh^2 .
$$
\end{prop}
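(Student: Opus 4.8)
The plan is to prove this as a near‑orthogonal analogue of Bessel's inequality, with $\cos\theta$ measuring the defect from orthogonality. I would write $g_q = \oP_q f_* \in \hh_q$ and $a_q = \|g_q\|_\hh$, so that the claim reads $\sum_{q=1}^Q a_q^2 \le (1+Q^2\cos\theta)\|f_*\|_\hh^2$. The first step is the elementary identity that, since $\oP_q$ is an orthogonal projection, $\langle g_q, f_* \rangle_\hh = \langle \oP_q f_*, \oP_q f_* \rangle_\hh = a_q^2$; summing over $q$ and applying Cauchy--Schwarz in $\hh$ gives
$$ \sum_{q=1}^Q a_q^2 = \Big\langle \sum_{q=1}^Q g_q , f_* \Big\rangle_\hh \le \Big\| \sum_{q=1}^Q g_q \Big\|_\hh \, \|f_*\|_\hh . $$

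The second step is to bound $\big\|\sum_q g_q\big\|_\hh^2$ by $\sum_q a_q^2$ using the minimal principal angle. For $q \ne k$, the definition of the first principal angle — applied to $\pm u$ to produce an absolute value, and using that principal angles lie in $[0,\pi/2]$ so their cosines are nonnegative — yields $|\langle g_q, g_k \rangle_\hh| \le \cos\big(\angle(\hh_q,\hh_k)\big)\,a_q a_k \le \cos\theta\, a_q a_k$, because $\theta = \min_{q\ne k}\angle(\hh_q,\hh_k)$. Expanding the square and using $\sum_{q\ne k} a_q a_k \le \big(\sum_q a_q\big)^2 \le Q\sum_q a_q^2$ (Cauchy--Schwarz),
$$ \Big\| \sum_{q=1}^Q g_q \Big\|_\hh^2 = \sum_{q=1}^Q a_q^2 + \sum_{q\ne k}\langle g_q,g_k\rangle_\hh \le (1+Q\cos\theta)\sum_{q=1}^Q a_q^2 \le (1+Q^2\cos\theta)\sum_{q=1}^Q a_q^2 . $$

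Combining the two displays, write $S = \sum_q a_q^2$: then $S \le \sqrt{(1+Q^2\cos\theta)\,S}\,\|f_*\|_\hh$, hence $S^2 \le (1+Q^2\cos\theta)\,S\,\|f_*\|_\hh^2$, and dividing by $S$ (the case $S=0$ being trivial) gives $\sum_{q=1}^Q \|\oP_q f_*\|_\hh^2 \le (1+Q^2\cos\theta)\|f_*\|_\hh^2$, as claimed.

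I do not anticipate a genuine obstacle. The only real idea is coupling the projection identity $\langle \oP_q f_*, f_*\rangle_\hh = \|\oP_q f_*\|_\hh^2$ with the Cauchy--Schwarz step against $f_*$, which converts the target sum of squares into the single quantity $\big\|\sum_q g_q\big\|_\hh$ where the near‑orthogonality can be exploited. The point that warrants a little care is deducing $|\langle g_q, g_k\rangle_\hh| \le \cos\theta\, a_q a_k$ from the stated min‑over‑unit‑vectors definition of $\angle(\hh_q,\hh_k)$, namely the sign bookkeeping and the nonnegativity of $\cos\theta$. As a byproduct the argument in fact gives the sharper constant $1+Q\cos\theta$; the stated bound follows a fortiori.
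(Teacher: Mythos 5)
Your proof is correct, and it takes a genuinely different route from the paper's. The paper factorizes each projection as $\oP_q = U_q^*U_q$, stacks the $U_q$ into a block operator $U:\hh\to\R^n$, and reduces the claim to bounding $\la_{\max}$ of the block Gram matrix $W=UU^*$, whose diagonal blocks are identities and whose off-diagonal bilinear forms are bounded by $\cos\theta$ via the principal-angle definition. You instead stay entirely at the level of vectors in $\hh$: the projection identity $\sum_q\|\oP_q f_*\|_\hh^2=\langle\sum_q \oP_q f_*,f_*\rangle_\hh$, Cauchy--Schwarz against $f_*$, and then the expansion of $\bigl\|\sum_q \oP_q f_*\bigr\|_\hh^2$ with the pairwise bound $|\langle \oP_q f_*,\oP_k f_*\rangle_\hh|\le\cos\theta\,\|\oP_q f_*\|_\hh\|\oP_k f_*\|_\hh$ (your sign bookkeeping via $\pm u$ and the nonnegativity of $\cos\theta$ is exactly the point that needs, and gets, care). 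Both arguments ultimately rest on the same geometric fact that cross-subspace inner products of unit vectors are at most $\cos\theta$, but yours avoids the auxiliary factorization and eigenvalue machinery, is more elementary, and naturally delivers the sharper constant $1+Q\cos\theta$ (in fact $1+(Q-1)\cos\theta$, since $\sum_{q\ne k}a_qa_k\le (Q-1)\sum_q a_q^2$), from which the stated $1+Q^2\cos\theta$ bound follows a fortiori; the paper's eigenvalue route could also be tightened to $1+Q(Q-1)\cos\theta$ but is stated with the looser constant. What the paper's operator formulation buys in exchange is an explicit bound on $\la_{\max}\bigl(\sum_q \oP_q\bigr)$ itself, i.e.\ a statement uniform over all of $\hh$ rather than tailored to $f_*$, though in the end both give bounds valid for every $f$.
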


\Cref{prop:bias} bounds the possible redundancy of the local projections by the minimal principal angle between  the local subspaces.
In particular, if the local subspaces are
an orthogonal decomposition of the global space,
the partitioned estimator has no additional local bias.
On the other hand, lack of orthogonality results in a larger bias.
Turning to the variance, we obtain the following bound on the local effective dimensions.

\begin{prop} \label{prop:variance}
With the definitions of Section \ref{sec:assumptions},
for $ \la_q = \la \rho_q^{-1} $ we have
$$
 \sum_{q=1}^Q \nn_q(\la_q) \le \left( 1 + \ka^2 \frac{ \cos^2(\th) }{ \la } \right) \nn(\la) .
$$
\end{prop}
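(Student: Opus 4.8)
The plan is to reduce the statement to a resolvent perturbation estimate and then control the cross terms through the principal angle. First I would normalise using $\la_q=\la\rho_q^{-1}$: set $A_q=\rho_q\oT_q=\frac1n\sum_{i\in[n]_q}\phi(x_i)\otimes\phi(x_i)$, so that $A_q\succeq 0$, $A_q$ is supported on $\hh_q$, $\sum_{q=1}^Q A_q=\oT$, and, since $\oT_q+\la_q=\rho_q^{-1}(A_q+\la)$,
\[
\nn_q(\la_q)=\Tr\big((\oT_q+\la_q)^{-1}\oT_q\big)=\Tr\big((A_q+\la)^{-1}A_q\big).
\]
Because $A_q\preceq\oT$ we have $(A_q+\la)^{-1}\succeq(\oT+\la)^{-1}$, hence $\nn_q(\la_q)\ge\Tr((\oT+\la)^{-1}A_q)$; summing over $q$ and using $\sum_q A_q=\oT$ yields both $\sum_q\nn_q(\la_q)\ge\nn(\la)$ and the exact decomposition
\[
\sum_{q=1}^Q\nn_q(\la_q)=\nn(\la)+\sum_{q=1}^Q\Tr\!\Big(\big[(A_q+\la)^{-1}-(\oT+\la)^{-1}\big]A_q\Big).
\]
The resolvent identity $(A_q+\la)^{-1}-(\oT+\la)^{-1}=(A_q+\la)^{-1}(\oT-A_q)(\oT+\la)^{-1}$ and $\oT-A_q=\sum_{k\ne q}A_k$ then turn the correction into $\sum_q\sum_{k\ne q}\Tr\big((A_q+\la)^{-1}A_k(\oT+\la)^{-1}A_q\big)$, so it suffices to bound this double sum by a constant times $\ka^2\cos^2(\th)\la^{-1}\nn(\la)$.

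For a single cross term I would bring in the geometry. Let $\oP_q$ be the orthogonal projection onto $\hh_q$; the definition of $\th$ in \eqref{eq:angle} gives $\|\oP_q\oP_k\|_{\mathrm{op}}=\cos\angle(\hh_q,\hh_k)\le\cos\th$ for $q\ne k$. Using $A_q=\oP_qA_q\oP_q$, $A_k=\oP_kA_k\oP_k$, the commutation of $(A_q+\la)^{-1}$ with $\oP_q$, and cyclicity of the trace, each cross term can be rewritten as
\[
\Tr\!\Big((A_q+\la)^{-1}A_q\,(\oP_q\oP_k)\,A_k\,\big(\oP_k(\oT+\la)^{-1}\oP_q\big)\Big),
\]
which already carries one factor $\oP_q\oP_k$. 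A second factor comes from the "mixed resolvent": from $(\oT+\la)(\oT+\la)^{-1}=I$ one gets $(\oT+\la)^{-1}\oP_q=(A_q+\la)^{-1}\oP_q-(\oT+\la)^{-1}(\oT-A_q)(A_q+\la)^{-1}\oP_q$, and since $(\oT-A_q)\oP_q=\sum_{k'\ne q}A_{k'}\oP_{k'}\oP_q$ every summand carries a factor $\oP_{k'}\oP_q$, giving $\|\oP_k(\oT+\la)^{-1}\oP_q\|_{\mathrm{op}}\le\cos(\th)\,\la^{-1}(1+\ka^2\la^{-1})$. Bounding the cross term by $\|(A_q+\la)^{-1}A_q\|_1\cdot\|\oP_q\oP_k\|_{\mathrm{op}}\cdot\|A_k\|_{\mathrm{op}}\cdot\|\oP_k(\oT+\la)^{-1}\oP_q\|_{\mathrm{op}}$ and using $\|(A_q+\la)^{-1}A_q\|_1=\nn_q(\la_q)$, $\|A_k\|_{\mathrm{op}}\le\rho_k\ka^2$ (Assumption \ref{it:ka}), together with $\sum_k\|A_k\|_{\mathrm{op}}\le\ka^2$ (since $\sum_k\rho_k=1$), the double sum is at most $c\,\ka^2\cos^2(\th)\la^{-1}\sum_q\nn_q(\la_q)$. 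Plugging this into the decomposition gives $\big(1-c\ka^2\cos^2(\th)\la^{-1}\big)\sum_q\nn_q(\la_q)\le\nn(\la)$, from which the stated bound follows after rearranging (and using $\la\le\la_q\le\ka^2$ to clean up the constants).

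The main obstacle is extracting two genuine powers of $\cos\th$ from each cross term: one is immediate from $\oP_q\oP_k$, but the other is hidden inside the mixed resolvent $\oP_k(\oT+\la)^{-1}\oP_q$, whose naive bound is only $\la^{-1}$, and expanding $(\oT+\la)^{-1}$ around $(A_q+\la)^{-1}$ is exactly what exposes it. The second delicate point is performing the summation over $q$ and $k$ without losing a factor $Q$: this forces the asymmetric split above — keeping $(A_q+\la)^{-1}A_q$ in trace norm (so that its sum over $q$ is precisely $\sum_q\nn_q(\la_q)$, the quantity on the left) while putting $A_k$ in operator norm (so that $\sum_k\|A_k\|_{\mathrm{op}}\le\ka^2$) — whereas a symmetric Cauchy--Schwarz split would introduce a spurious $Q$.
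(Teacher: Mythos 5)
Your reduction and your cross-term analysis are largely sound up to the last step: the normalisation $A_q=\rho_q\oT_q$, the exact decomposition $\sum_q\nn_q(\la_q)=\nn(\la)+\sum_q\sum_{k\ne q}\Tr\bigl((A_q+\la)^{-1}A_k(\oT+\la)^{-1}A_q\bigr)$, and the extraction of two cosine factors per cross term all check out (though your mixed-resolvent bound carries an extra factor $1+\ka^2\la^{-1}$). The genuine gap is the conclusion. Because you keep $(A_q+\la)^{-1}A_q$ in trace norm, your estimate of the correction is proportional to the left-hand side $\sum_q\nn_q(\la_q)$ rather than to $\nn(\la)$, and the final rearrangement yields $\bigl(1-\ka^2\cos^2(\th)\la^{-1}(1+\ka^2\la^{-1})\bigr)\sum_q\nn_q(\la_q)\le\nn(\la)$. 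This is vacuous unless $\ka^2\cos^2(\th)(1+\ka^2/\la)<\la$, a smallness condition that is nowhere assumed in the proposition and is typically violated in the regime the result is meant for ($\la\sim n^{-1/2}\ll\ka^2$, $\cos\th$ not essentially zero). Even when the condition does hold, $(1-x)^{-1}$ with $x\ge\ka^2\cos^2(\th)/\la$ never reduces to the claimed factor $1+\ka^2\cos^2(\th)/\la$ (one would need $x\le y/(1+y)$ with $y=\ka^2\cos^2(\th)/\la$, which fails for every $y>0$), so ``cleaning up the constants'' cannot rescue the stated bound; at best you prove a weaker inequality on a restricted parameter range.

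The paper's proof avoids self-bounding altogether. It writes $\nn_q(\la_q)=\Tr(\oS_q(\oT_q+\la_q)^{-1}\oS_q^*)$ and compares $(\oT_q+\la_q)^{-1}$ with $(\ti{\oT}_q+\la_q)^{-1}$, where $\ti{\oT}_q=\oP_q\oT\oP_q\rho_q^{-1}\succeq\oT_q$, through the multiplier $\oM_q=(\ti{\oT}_q+\la_q)^{1/2}(\oT_q+\la_q)^{-1}(\ti{\oT}_q+\la_q)^{1/2}$. Two separate facts then give the multiplicative bound for every $\la>0$: first, $\sum_q\Tr(\oS_q(\ti{\oT}_q+\la_q)^{-1}\oS_q^*)\le\nn(\la)$ when $\la_q=\la\rho_q^{-1}$, by an operator inequality combined with $\oT=\sum_q\oT_q\rho_q$; second, $\|\oM_q\|\le1+\la^{-1}\|\oP_q\ol{\oT}_q\oP_q\|$ with $\ol{\oT}_q$ the covariance of the off-cell points, and $\|\oP_q\ol{\oT}_q\oP_q\|\le\ka^2\cos^2(\th)$ because each off-cell feature lies in some $\hh_k$ with $k\ne q$, so a single projection onto $\hh_q$, once squared, already produces $\cos^2(\th)$ --- no second cosine has to be dug out of a resolvent, and nothing has to be absorbed into the left-hand side. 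If you want to salvage your additive route, you would have to bound the correction directly by $\ka^2\cos^2(\th)\la^{-1}\nn(\la)$ (your stated intention), which your current H\"older split does not achieve.
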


Once again, the minimal principal angle
controls the ratio between local and global quantities.
Where there is perfect orthogonality,
splitting the hypothesis space does not increase the effective dimension;
otherwise, a price proportional to the minimal principal angle is paid.
With the above results in hand,
we can now control the excess risk of the ParK estimator in terms of the global norm of the target function
and the global effective dimension.
This allows to compare the performance of our partitioned method
to that of a typical global method.

\begin{thm} \label{cor:main}
Let $\delta \in (0,1)$. Under the same assumptions of \Cref{thm:main}, let $\bar{q} = \argmin{q \in [Q]} \rho_q$, for $0 < \la \leq \rho_{\bar{q}}\kappa^2$
when $ \la_q = \la \rho_q^{-1} $ for each $q\in [Q]$,
with probability at least $1-4\delta$,
\begin{equation*}
 \rr ( \fpk ) \le 16 (1 + Q^2\cos(\theta))\|f_* \|_\hh^2 \la +  \frac{4\sigma^2}{n} \left( 1 + \ka^2 \frac{ \cos^2(\th) }{ \la }\right) \nn(\la)\log(\frac{1}{\delta}) .
 \end{equation*}
\end{thm}

If we consider a model where $\hh$ is the orthogonal sum of the subspaces $\hh_q $, as in \cite{pmlr-v89-muecke19a},
then $ \cos(\theta) = 0 $,
and the bound of \Cref{cor:main} simplifies to
$ \rr ( \fpk ) = O ( \la \|f_*\|^2 + \nn(\la) n^{-1} ) $.
In particular, setting $ \la = O(1/\sqrt{n}) $,
we obtain the learning rate $O(1/\sqrt{n})$.
This is known to be the optimal rate, in the minimax sense,
for global KRR models
\cite{caponnetto2007optimal}.
Note that, in the orthogonal case, the constraint $ \la \lesssim \rho_q $
translates to the minimal local point requirement $ n_q \gtrsim \sqrt{n} $ for all $q$,
and hence to a bound on the partition size, namely $ Q \lesssim \sqrt{n} $.
On the other hand, when the subspaces $\hh_q$ are not perfectly orthogonal,
our bound manifests a statistical-computational tradeoff,
which is however quantified by the minimal principal angle.
Further, the constraints on the local number of \Nystrom{} centers $\mq$, iterations $t_q$ of \Cref{thm:main}, and 
the choice $\la_q = \la \rho^{-1}$ with $\la = 1/\sqrt{n}$ to achieve the minimax rate, 
allows to recover a time complexity of 
$O\left(Q^2 n \log(n) + \sum_{q \in [Q]} \nq\sqrt{\frac{\nq}{\sqrt{n}}}\log(\frac{\nq}{\sqrt{n}})\right)$.

Analyses of (input space) partitioned kernel estimators have been conducted
within different models,
such as Gaussian SVM’s on Voronoi partitions \cite{JMLR:v17:14-023},
general kernels on clusters \cite{tandon2016},
and block-diagonal kernels on arbitrary partitions \cite{pmlr-v89-muecke19a}.
In these works, the bounds are established in random design,
for plain \cite{JMLR:v17:14-023,tandon2016} or Nystr\"om \cite{pmlr-v89-muecke19a} local KRR estimators.
Our result is in fixed design,
but compared to \cite{pmlr-v89-muecke19a} incorporates the additional algorithmic ingredient of iterative optimization.
For a perfectly orthogonal model ($ \cos{\theta} = 0 $),
we recover the result in \cite{pmlr-v89-muecke19a} as a special case (although in fixed design).
In \cite{tandon2016},
the bias is controlled choosing same $ \la_q $ for all $q$,
while the crucial bound of \Cref{prop:variance} is made as an assumption.
Note however that, at least in our proof of \Cref{prop:variance},
it is important to choose a differently scaled $ \la_q $ for each cell.
Furthermore, our analysis and numerical tests motivate  that partitioning the feature space
is key to control both local bias and local effective dimension.
Rather than on computational aspects,
\cite{JMLR:v17:14-023}
focuses on extending statistical optimality
for functions of local smoothness.
This theme is also explored in \cite{tandon2016,pmlr-v89-muecke19a}.
However, since the proposed partitioning
step is either unsupervised \cite{JMLR:v17:14-023,tandon2016} or unspecified \cite{pmlr-v89-muecke19a},
improved rates can be obtained only under oracle assumptions,
that is, assuming that the smoothness
of the target function is localized right on the cells of the chosen partition.
Partitions adapting to the unknown local smoothness
of the target function
can arguably be learned only in a supervised manner.
This has been done for piecewise polynomial regression drawing ideas from multi-resolution analysis \cite{BCDD2,multiscale_regression}.
An application of these ideas for kernel methods
is not straightforward due to the usual computational constraints,
but could be subject of future work.

 \section{Experiments} \label{sec:experiments}
In this section we study the performance of ParK on some large-scale datasets ($n \approx 10^6, 10^7, 10^9$).
In particular we consider dataset where at the moment, 
because of their cardinality, 
only a few solvers can efficiently learn from. 
For this reason we compare to the global large-scale kernel method FALKON 
which has so far being the method that 
performs the best in terms of time and accuracy on these datasets \cite{meanti2020kernel}.
A standard divide-and-conquer method can not run on these datasets (for the high space complexity), for this reason we compare with a version where each local estimator 
is a sketched KRR estimator computed with FALKON. We run two different versions of this algorithm, D\&C-FALK(v1) and D\&C-FALK(v2), that differ only in their hyper-parameters choices as specified later in this section. 
We also consider a second version of ParK where the centroids of the partition's cells are chosen as $\{\phi(c_q)\}_{q=1}^Q$ with $c_q$ selected uniformly at randomly from the training data $X$ (referred to as ParK-Uni).
For each experiment we report mean and standard deviation on 10 trials. The experiments are implemented in python using pytorch and the FALKON library \cite{meanti2020kernel}. The experiments run on a machine with 2 Intel Xeon Silver 4116 CPUs and 1 GPU NVIDIA Titan Xp.
The ram of the machine is $256$ GB.
We perform experiments on the four large-scale datasets TAXI ($n \approx 10^9$, $ d = 9 $, regression), HIGGS ($n \approx 10^7$, $ d = 28 $, classification),
AIRLINE ($n \approx 10^6$, $ d = 8 $, regression), AIRLINE-CLS ($n \approx 10^6$, $ d = 8 $, classification) with the same pre-processing and same random train/test split used in \cite{meanti2020kernel}.
We do not cross validate hyper-parameters of the local estimators of ParK. 
Instead we use the same used by FALKON in the paper \cite{meanti2020kernel} with the following exeptions:
let $\la$ be the global regularization parameters of FALKON and $m$ the number of the \Nystrom{} points, the local estimators of ParK use regularization $\la_q = \la \rho_q^{-1}$ and $\mq = m \rho_q$ as suggested by the theory.
D\&C-FALK(v1) also follows the same rule for setting the hyper-parameters of its local estimators, while D\&C-FALK(v2) uses the same of the (v1) version except the  number of \Nystrom{} centers which are $3\mq$ in AIRLINE and AIRLINE-CLS, $5\mq$ in HIGGS, and $6\mq$ in TAXI.
 The number of centroids used by ParK and D\&C-FALK is $Q = 32$ for all experiments. Performance for different $Q$ values remains almost identical but worsen in time for higher values. Further, note that the local estimators of ParK and D\&C-FALK are learned sequentially. We report in Table~\ref{sample-table} the errors and times. In particular, for ParK(-Uni) we report the initialization time that include the greedy algorithm to select the centroids (not for ParK-Uni) and the assignment of the training points to the corresponding cell, the sequential training times of the local estimators, and the total time of this pipeline.
\begin{table}[!ht]
    \caption{Accuracy and running time comparison on large-scale datasets.}
    \label{sample-table}
    \vskip 0.15in
    \begin{center}
    \begin{small}
    {\renewcommand{\arraystretch}{1.2}
    \begin{sc}
    \begin{threeparttable}[b]
    \resizebox{\columnwidth}{!}{\begin{tabular}{l llll llll}
            \toprule
            & \multicolumn{4}{c}{TAXI\ \ $n\approx10^9$} &
            \multicolumn{4}{c}{HIGGS $n\approx10^7$} \\
            \cmidrule(lr){2-5} \cmidrule(lr){6-9}
            & \multirow{2}{*}{\shortstack[1]{error\\ (rmse)}} & \multicolumn{3}{c}{time (min.)} & \multirow{2}{*}{\shortstack[1]{error\\ ($1 - $auc)}} & \multicolumn{3}{c}{time (sec.)} \\
            \cmidrule(lr){3-5}\cmidrule(lr){7-9}
            &  & init & train & total &  & init & train & total \\
            \midrule

            ParK &
            \bfseries \num{312.0 \pm 0.2}  & \bfseries \SI{25 \pm 1}{} & \bfseries \SI{39 \pm 13}{} &  \bfseries \SI{64 \pm 13}{} &  \bfseries \num{0.182 \pm 0.001} & \SI{30 \pm 2}{} & \SI{474 \pm 172}{} & \SI{504 \pm 172}{} \\  

            ParK-Uni &
            \bfseries \num{315.7 \pm 0.6}   & \bfseries \SI{5 \pm 1}{} & \bfseries \SI{13 \pm 1}{} & \bfseries \SI{18 \pm 1}{} &  \bfseries \num{0.192 \pm 0.000} & \SI{3 \pm 1}{} & \SI{67 \pm 7}{} & \SI{70 \pm 7}{} \\  

            Falkon &
            \bfseries \num{311.7 \pm 0.1}   & - & - & \bfseries \SI{120 \pm 1}{} &  \num{0.180 \pm 0.001}   & - & - & \bfseries \SI{715 \pm 6}{}  \\  

            D\&C-Falk(v1) &
            \bfseries \num{356.2 \pm 0.2} & - & - & \bfseries \SI{14 \pm 1}{}  & \SI{0.212 \pm 0.000}{} & - & - & \bfseries \SI{50 \pm 1}{} \\ 

            D\&C-Falk(v2) &
            \bfseries \num{327.4 \pm 0.1} & - & - & \bfseries \SI{29 \pm 1}{}  & \SI{0.195 \pm 0.000}{} & - & - & \bfseries \SI{288 \pm 2}{} \\ 

\midrule
            & \multicolumn{4}{c}{AIRLINE $n\approx10^6$} &
            \multicolumn{4}{c}{AIRLINE-CLS $n\approx10^6$} \\
            \cmidrule(lr){2-5} \cmidrule(lr){6-9}
            & \multirow{2}{*}{\shortstack[1]{error\\ (mse)}}  & \multicolumn{3}{c}{time (sec.)} & \multirow{2}{*}{\shortstack[1]{error\\ (c-err)}} & \multicolumn{3}{c}{time (sec.)} \\
            \cmidrule(lr){3-5}\cmidrule(lr){7-9}
            &  & init & train & total &  & init & train & total \\
            \midrule

            ParK &
            \bfseries \num{0.760 \pm 0.005} & \bfseries \SI{6 \pm 1}{} & \bfseries \SI{71 \pm 9}{} & \bfseries \SI{77 \pm 10}{}  & \SI{31.5 \pm 0.2}{\percent} & \bfseries \SI{9 \pm 1}{} & \bfseries \SI{55 \pm 6}{} & \bfseries \SI{64 \pm 6}{}\\ 

            ParK-Uni &
            \bfseries \num{0.766 \pm 0.006} & \bfseries \SI{1 \pm 1}{}  & \bfseries \SI{32 \pm 3}{}  & \bfseries \SI{33 \pm 3}{}  & \SI{31.6 \pm 0.2}{\percent} & \bfseries \SI{2 \pm 1}{} & \bfseries \SI{22 \pm 2}{} & \bfseries \SI{24 \pm 2}{} \\ 

            Falkon &
            \bfseries \num{0.758 \pm 0.005} & - & - & \bfseries \SI{334 \pm 2}{}  & \SI{31.5 \pm 0.2}{\percent} & - & - & \bfseries \SI{391 \pm 5}{} \\ 

            D\&C-Falk(v1) &
            \bfseries \num{0.834 \pm 0.005} & - & - & \bfseries \SI{27 \pm 1}{}  & \SI{33.2 \pm 0.1}{\percent} & - & - & \bfseries \SI{20 \pm 1}{} \\ 

            D\&C-Falk(v2) &
            \bfseries \num{0.799 \pm 0.005} & - & - & \bfseries \SI{96 \pm 1}{}  & \SI{32.2 \pm 0.1}{\percent} & - & - & \bfseries \SI{73 \pm 1}{} \\ \bottomrule
        \end{tabular}
        }\end{threeparttable}    
    \end{sc}
    }
    \end{small}
    \end{center}
    \vskip -0.1in
\end{table}

We can see that ParK can match the accuracy of the global FALKON estimator with a smaller computational cost.
ParK-Uni requires further less time, at the expense of some loss in accuracy, confirming that a worse partition can affect generalization, as suggested by our theory. The reason of the ParK-Uni speedup is twofold. First, the initialization step requires only to assign points to a set of randomly selected centroids, and second, the local subsets of points have uniform cardinality (which is usually not the case for normal ParK). D\&C-FALK(v1) is the algorithm with with smallest training time but achieve significantly worse performance using the same rule to choose the number of \Nystrom{} points of ParK. For this reason, in D\&C-FALK(v2) we increase the number of centroid to improve the performance, but the error of the method still results higher than the others with a training time now higher than ParK. 
\section{Conclusions and limitations.} \label{sec:conclusions}

In this paper we have proposed a new algorithm
for large scale kernel ridge regression.
Our method integrates and jointly exploits three previously uncombined algorithmic strategies,
namely partitions, sketching and (preconditioned) iterative optimization.
Distinctively from traditional partitioned methods,
we have introduced the idea of partitioning the feature space,
which allows to directly control and resolve the localization of the kernel model.
We have presented a simple analysis
that characterizes the statistical-computational trade-off
of a partitioned kernel estimator
by the interplay of intuitive quantities.
Moreover, we have demonstrated that our algorithm performs favourably against a state-of-the-art large scale global method.

The main theoretical limitation of our work
is the lack of a result
connecting the proposed partitioning algorithm
to the properties of the resulting partition.
This seems to be a common gap in the literature of partitioned kernel methods,
where partitions are often assumed to be given
or, if explicitly constructed, are not statistically characterized.
While our construction is theoretically motivated by the analysis
and practically validated by the experiments,
an actual guarantee is missing.
In particular, one could try to prove
that the proposed greedy procedure
would actually find a maximally orthogonal decomposition of the hypothesis space, under suitable assumptions.
From an algorithmic point of view,
we point out that the computational cost of the greedy algorithm
limits the choice of the partition size.
Indeed, large partitions accelerate the training step,
but increase the initialization time.
We remark, however, that our model is flexible enough to include cheaper partitioning options.
For example, our experiments show that uniformly chosen partitions can still produce good results.

\section*{Acknowledgements}
The authors thank Nicole M\"ucke for her useful feedback.
This material is based upon work supported by the Center for Brains, Minds and Machines (CBMM), funded by NSF STC award CCF-1231216, and the Italian Institute of Technology. We gratefully acknowledge the support of NVIDIA Corporation for the donation of the Titan Xp GPUs and the Tesla k40 GPU used for this research.
L. R. acknowledges the financial support of the European Research Council (grant SLING 819789), the AFOSR projects FA9550-18-1-7009, FA9550-17-1-0390 and BAA-AFRL-AFOSR-2016-0007 (European Office of Aerospace Research and Development), and the EU H2020-MSCA-RISE project NoMADS - DLV-777826.
\printbibliography

\newpage

\appendix

\section{Appendix} \label{sec:appendix}

\subsection{Relevant operators}

We define operators
in global, local and subsampled variants.
The global definitions are standard in the analysis of kernel methods \cite{caponnetto2007optimal}.
In view of \eqref{eq:krr_est}, we assume without loss of generality that $ \hh = \hh_n $.
Recall that $ L^2 = L^2(\rho)$ with $\rho = \frac{1}{n} \sum_{i=1}^n \delta_{x_i}$,
and that we identify
$ L^2 $ with $\R^n$ with inner product $ \langle u , w \rangle_{L^2} = \frac{1}{n} u^\top w $.

\textbf{Global operators:}
    \begin{itemize}
        \item $\oS : \hh \to L^2 \qquad \oS f(x) = \langle f , \phi(x) \rangle_\hh$
        \qquad the sampling operator
        \item  $\oS^* : L^2 \to \hh \qquad \oS^*w = \frac{1}{n} \sum_{i\in[n]} w_i \ \phi(x_i)$
        \qquad the out-of-sample extension operator
        \item $ \oT : \hh \to \hh \qquad \oT = \oS^* \oS = \frac{1}{n} \sum_{i\in[n]} \phi(x_i) \otimes \phi(x_i)$
        \qquad the covariance operator
\end{itemize}

We now define local versions of the operators above,
conditioned on the partitioning \eqref{eq:partition_def}.
Thanks to \eqref{eq:krr_part_est},
we can assume without loss of generality that
$ \hh_q = \spn\{ \phi(x_i) : i \in [n]_q \} $.
Let $ L^2_q = L^2 ( \rho( \cdot \mid V_q ) ) $ 
with $ \rho( \cdot \mid V_q ) = \frac{1}{n_q} \sum_{ i \in [n]_q } \delta_{x_i} $.
We identify
$ L^2_q $ with $\R^\nq$ endowed with the inner product $ \langle u , w \rangle_{L^2_q} = \frac{1}{\nq} u^\top w $.

\textbf{Local operators:}
    \begin{itemize}
        \item $\oS_q : \hh \to L^2_q \qquad \oS_q f (x) = \langle f , \phi(x) \rangle_\hh$
        \item $\oS_q^* : L^2_q \to \hh \qquad \oS_q^* w = \frac{1}{n_q} \sum_{ i \in [n]_q } w_i \ \phi(x_i)$
        \item $\oT_q : \hh \to \hh \qquad \oT_q = \oS_q^* \oS_q = \frac{1}{n_q} \sum_{ i \in [n]_q } \phi(x_i) \otimes \phi(x_i)$
\end{itemize}

The orthogonal projection $\oP_q : \hh \to \hh$ onto the subspace $\hh_q$ is given by
\begin{equation*}\label{eq:proj_hq}
    \oP_q = S_q^+ S_q ,
\end{equation*}
where $ ^+ $ denotes the Moore–Penrose pseudoinverse.
Let $ \rho_q = \rho(V_q) = n_q / n $. We observe that
\begin{equation} \label{eq:totalcov}
 \oT = \sum_q \oT_q \rho_q ,
\end{equation}
namely, the global covariance is an average of local covariances.
Based on the local subsampling \eqref{eq:Xqt_def},
we further introduce the following operators.

\textbf{Subsampled local operators:}
    \begin{itemize}
        \item $\oSt_q: \hh \to \R^\mq \qquad \oSt_q f = \frac{1}{\sqrt{\mq}}(\scal{f}{\phi(\ti{x}_{q,i})})_{i=1}^{m_q} $
        \item $\oSt_q^*: \R^\mq \to \hh \qquad \oSt_q^* w = \frac{1}{\sqrt{\mq}}\sum_{i=1}^{m_q} w_i \phi({\ti{x}_{q,i}}) $
\end{itemize}
    
\subsection{Controlling the excess risk} \label{appx:risk}

In this section we prove \Cref{thm:main}.
Both the Euclidean norm of vectors
and the spectral norm of matrices
are denoted by $\|\cdot\|$.

\paragraph{From global to local excess risk.}

Note that
another way to write the excess risk \eqref{eq:excess-risk} is 
\begin{equation*}
    \rr(\h{f}) = \| \oT^{1/2} ( \h{f} - f_* ) \|_{\hh}^2 .
\end{equation*}
Define now a local version of the above risk on the cells of the partition \eqref{eq:partition_def} as
\begin{equation} \label{eq:local_risk}
    \rr_q( \h{f}_q ) = \| S_q(\h{f}_q - f_*) \|_{L^2_q}^2 .
\end{equation}
\begin{lemma}\label{lm:risk_sum_risk}
    For every $\bar{f}$ defined as in \eqref{eq:barf},
    \begin{equation*}
        \rr(\bar{f}) = \sumqQ \rr_q(\h{f}_q) \rho_q .
    \end{equation*}
\end{lemma}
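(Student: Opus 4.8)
The plan is a direct computation unwinding the definitions, so there is little mystery in the argument; the value of writing it out is just to pin down the bookkeeping. First I would recall that, by the reproducing property, $\oS_q g$ is nothing but the restriction of $g$ to the sample points lying in cell $q$: identifying $L^2_q$ with $\R^{n_q}$ as in the setup, $\oS_q g = ( g(x_i) )_{i \in [n]_q}$ and $\| \cdot \|_{L^2_q}^2 = \tfrac{1}{n_q} \| \cdot \|_2^2$. Hence the local risk \eqref{eq:local_risk} reads
\[
 \rr_q(\h f_q) = \| \oS_q(\h f_q - f_*) \|_{L^2_q}^2 = \frac{1}{n_q} \sum_{i \in [n]_q} | \h f_q(x_i) - f_*(x_i) |^2 ,
\]
and likewise $\rr(\bar f) = \tfrac{1}{n} \sum_{i=1}^n | \bar f(x_i) - f_*(x_i) |^2$.

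The key structural point is that $\{V_q\}_{q\in[Q]}$ being a partition of $\phi(\xx)$ in the sense of \eqref{eq:partition_def}, and every $\phi(x_i)$ lying in $\phi(\xx)$, the index sets $[n]_q = \{ i : \phi(x_i) \in V_q \}$ are pairwise disjoint and cover $[n]$; in particular $\sum_{q} n_q = n$. I would therefore split the global sum over $i \in [n]$ as a double sum over $q \in [Q]$ and $i \in [n]_q$. On each cell, the definition \eqref{eq:barf} gives $\bar f(x_i) = \h f_q(x_i)$ for every $i \in [n]_q$, so the inner sum is exactly $\sum_{i \in [n]_q} | \h f_q(x_i) - f_*(x_i) |^2 = n_q \, \rr_q(\h f_q)$. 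Dividing by $n$ and using $\rho_q = n_q / n$ yields $\rr(\bar f) = \sum_{q=1}^Q \rho_q \, \rr_q(\h f_q)$, which is the claim.

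I do not expect any genuine obstacle here; the only two points that deserve an explicit line are (i) verifying that the $[n]_q$ really do partition $[n]$ — which is just \eqref{eq:partition_def} specialized to the sample points, and is where the hypothesis on $\bar f$ being of the form \eqref{eq:barf} is used — and (ii) carefully tracking the mismatched normalizations ($1/n$ globally versus $1/n_q$ locally), since it is precisely this discrepancy that produces the weights $\rho_q$. If an operator-level phrasing were preferred, the same identity follows from $\oT = \sum_q \rho_q \oT_q$ together with the decomposition of $\bar f - f_*$ over the subspaces $\hh_q$, but the elementary sum-splitting argument above is the cleanest and self-contained one.
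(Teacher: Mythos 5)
Your proof is correct and follows essentially the same route as the paper's: split the global empirical sum over the cells using that the $[n]_q$ partition $[n]$, substitute $\bar f = \h f_q$ on each cell via \eqref{eq:barf}, and absorb the $1/n$ versus $1/n_q$ normalization mismatch into the weights $\rho_q = n_q/n$. If anything, your explicit tracking of the normalizations is slightly more careful than the paper's displayed intermediate steps, which omit the $1/n$ factor until the final line.
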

\begin{proof}
We have
\begin{align*}
 \rr (\bar{f}) & = \| \bar{f} - f_* \ \|_{L^2}^2 \\
 & = \sum_{q\in[Q]} \sum_{i\in[n]_q} | \bar{f}(x_i) - f_*(x_i) |^2 \\
 & = \sum_{q\in[Q]} \sum_{i\in[n]_q} | \h{f}_q(x_i) - f_*(x_i) |^2 \\
 & = \sum_{q\in[Q]} \sum_{i\in[n]_q} | S_q\h{f}_q(x_i) - S_qf_*(x_i) |^2 \\
 & = \sum_{q\in[Q]} \| S_q ( \h{f}_q - f_* ) \|_{L^2_q}^2 \ \rho_q . \qedhere
\end{align*}
\end{proof}

\begin{lemma}\label{lm:riskl2toh}
The local excess risk \eqref{eq:local_risk} can be rewritten as
$$
 \rr_q( \h{f}_q ) = \| \oT_q^{1/2} ( \h{f}_q - P_q f_* ) \|_{\hh}^2 .
$$
\end{lemma}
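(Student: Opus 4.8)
The plan is to pass from the $L^2_q$ geometry to the $\hh$ geometry and then discard the part of $f_*$ orthogonal to $\hh_q$, using a spectral property of $\oT_q$.

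First I would use that $S_q^*$ is the adjoint of $S_q$ for the pairing between $\langle\cdot,\cdot\rangle_\hh$ and $\langle u,w\rangle_{L^2_q}=\tfrac1{n_q}u^\top w$, together with $\oT_q = S_q^*S_q$, to write, for any $g\in\hh$,
\[
 \|S_q g\|_{L^2_q}^2 = \langle g, S_q^*S_q g\rangle_\hh = \langle g,\oT_q g\rangle_\hh = \|\oT_q^{1/2} g\|_\hh^2 ,
\]
where $\oT_q^{1/2}$ is the positive square root of the positive self-adjoint operator $\oT_q$. Taking $g = \h{f}_q - f_*$ already gives $\rr_q(\h{f}_q) = \|\oT_q^{1/2}(\h{f}_q - f_*)\|_\hh^2$, so it remains to replace $f_*$ by $P_q f_*$ inside this expression.

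Next I would show $\oT_q^{1/2}(I - P_q) = 0$. Since $S_q^* w = \tfrac1{n_q}\sum_{i\in[n]_q} w_i\,\phi(x_i)$ lies in $\hh_q = \spn\{\phi(x_i):i\in[n]_q\}$, we get $\ran\oT_q \subseteq \hh_q$, hence $\hh_q^\perp \subseteq (\ran\oT_q)^\perp = \Ker\oT_q$ by self-adjointness; moreover $\Ker\oT_q^{1/2} = \Ker\oT_q$, since $\|\oT_q^{1/2}x\|_\hh^2 = \langle x,\oT_q x\rangle_\hh$. As $I - P_q$ is the orthogonal projection onto $\hh_q^\perp$, this yields $\oT_q^{1/2}(I - P_q) = 0$.

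Finally, decomposing $\h{f}_q - f_* = (\h{f}_q - P_q f_*) - (I - P_q)f_*$ and applying $\oT_q^{1/2}$, the last summand vanishes by the previous step, so $\oT_q^{1/2}(\h{f}_q - f_*) = \oT_q^{1/2}(\h{f}_q - P_q f_*)$; squaring the $\hh$-norm gives the claim. The only non-routine ingredient is the elementary spectral identity $\Ker\oT_q^{1/2} = \Ker\oT_q \supseteq \hh_q^\perp$; everything else is bookkeeping with adjoints and orthogonal projections, and one does not even need to know a priori that $\h{f}_q\in\hh_q$ (although this holds by the local representer theorem).
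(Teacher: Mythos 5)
Your proof is correct and follows essentially the same route as the paper: both arguments combine the identity $\|S_q g\|_{L^2_q}^2=\langle g,\oT_q g\rangle_\hh=\|\oT_q^{1/2}g\|_\hh^2$ with the fact that the component of $f_*$ orthogonal to $\hh_q$ is invisible, so $f_*$ may be replaced by $P_qf_*$. The only cosmetic difference is that the paper performs the replacement on the $L^2_q$ side via the pseudoinverse identity $S_q=S_qS_q^+S_q=S_qP_q$ before converting to $\hh$, whereas you convert first and justify the replacement through $\hh_q^\perp\subseteq\Ker\oT_q=\Ker\oT_q^{1/2}$; these are equivalent statements.
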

\begin{proof}
 Since $ \oS_q = \oS_q \oS_q^+ \oS_q = \oS_q \oP_q $, we have
\begin{align*}
 \| \oS_q ( \h{f}_q - f_* ) \|_{L^2_q}^2 & = \| \oS_q ( \h{f}_q - \oP_q f_* ) \|_{L^2_q}^2 \\
 & = \langle \oS_q ( \h{f}_q - \oP_q f_* ) , \oS_q ( \h{f}_q - \oP_q f_* ) \rangle_{L^2_q} \\
 & = \langle \oT_q ( \h{f}_q - \oP_q f_* ) , ( \h{f}_q - \oP_q f_* ) \rangle_{\hh} \\
 & = \| \oT_q^{1/2} ( \ti{f}_{q,t} -  \oP_q f_* ) \|_\hh^2 .
 \qedhere
\end{align*}
\end{proof}

\paragraph{From FALKON to \Nystrom{} local estimators.}
We now control the local excess risk of each local estimator $\ti{f}_{q,t}$ as defined in \eqref{eq:local_falkon_est} with the exact local 
\Nystrom{} estimator defined by
\begin{equation}\label{eq:exact_nystrom_estimator}
	\ti{f}_{q}(x) =  \sum_{i =1}^\mq (\ti{\alpha}_{q})_i K(\ti{x}_i, x), \qquad \ti{\alpha}_q = \argmin{\alpha \in \R^\mq} \frac{1}{n_q}\|\Knm \alpha -Y_q\|^2 + \la_q \alpha^\top \Km \alpha.
\end{equation}
Adapting the analysis of \cite{rudi2017falkon} to fixed design and local setting we derive the following lemma.
\begin{lemma}\label{lm:risk_as_opt_plus_risk}
    Let $\delta \in (0, 1]$, the \Nystrom{} centers in $\ti{f}_{q,t}$ be selected uniformly at random from $X_q$, $\nq, \mq, t \in \N$.
    If $0 \leq \la_q \leq \kappa^2$ and 
	\begin{align}\label{eq:needed_m_falkon}
		\mq \geq{5[1 + 14 \nn_{\infty, q}(\la_q)]}\log(\frac{8\kappa^2}{\la_q\delta}),
	\end{align}
	then, with probability $1-2\delta$,
	\begin{align*}\rr_q ( \ti{f}_{q,t} )^{1/2} &\leq \rr_q (\ti{f}_{q})^{1/2} + 6\sigma\kappa \norm{P_qf_*}_\hh \log(\frac{1}{\delta})	 e^{-\frac{t}{2}}.	
	\end{align*}
\end{lemma}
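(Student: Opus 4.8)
The plan is to reduce the claim to the convergence rate of the (preconditioned) conjugate gradient iteration producing $\ti{f}_{q,t}$, using the operator identities already established. Applying Lemma~\ref{lm:riskl2toh} to both $\ti{f}_{q,t}$ and $\ti{f}_{q}$ and the triangle inequality in $\hh$,
\begin{equation*}
  \rr_q(\ti{f}_{q,t})^{1/2} = \|\oT_q^{1/2}(\ti{f}_{q,t} - \oP_q f_*)\|_\hh \le \rr_q(\ti{f}_{q})^{1/2} + \|\oT_q^{1/2}(\ti{f}_{q,t} - \ti{f}_{q})\|_\hh ,
\end{equation*}
so it suffices to bound the last term. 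Substituting $\alpha = B_q\beta$ in \eqref{eq:exact_nystrom_estimator} turns that problem into the minimization of $\mathcal{L}_q$, hence $\ti{\alpha}_q = B_q\ti{\beta}_{q,*}$ with $\ti{\beta}_{q,*} = \argmin{\beta}\mathcal{L}_q(\beta)$, and $\ti{f}_{q,t} - \ti{f}_{q} = \oSt_q^*\big(\sqrt{\mq}\,B_q(\ti{\beta}_{q,t}-\ti{\beta}_{q,*})\big)$. Using $\|\oT_q^{1/2}v\|_\hh = \|\oS_q v\|_{L^2_q}$ (since $\oT_q = \oS_q^*\oS_q$), the identity $\oS_q\oSt_q^* = \mq^{-1/2}\Knm$, and discarding the positive term $\la_q B_q^\top\Km B_q$,
\begin{equation*}
  \|\oT_q^{1/2}(\ti{f}_{q,t} - \ti{f}_{q})\|_\hh^2 = \tfrac{1}{\nq}\|\Knm B_q(\ti{\beta}_{q,t}-\ti{\beta}_{q,*})\|^2 \le \|\ti{\beta}_{q,t}-\ti{\beta}_{q,*}\|_{\tilde H_q}^2 ,
\end{equation*}
where $\tilde H_q = B_q^\top(\tfrac1{\nq}\Knm^\top\Knm + \la_q\Km)B_q$ is one half of the Hessian of the quadratic $\mathcal{L}_q$. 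Thus everything reduces to controlling $\|\ti{\beta}_{q,t}-\ti{\beta}_{q,*}\|_{\tilde H_q}$.

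Here I would invoke the classical error estimate for conjugate gradient: since $\ti{\beta}_{q,t}$ is the $t$-th CG iterate for $\mathcal{L}_q$ started at $0$ with minimizer $\ti{\beta}_{q,*}$,
\begin{equation*}
  \|\ti{\beta}_{q,t}-\ti{\beta}_{q,*}\|_{\tilde H_q} \le 2\left(\frac{\sqrt{\nu_q}-1}{\sqrt{\nu_q}+1}\right)^{t}\|\ti{\beta}_{q,*}\|_{\tilde H_q} ,
\end{equation*}
$\nu_q$ being the condition number of $\tilde H_q$ on its range. The role of the FALKON preconditioner $B_q$ is precisely to force $\nu_q$ to be a universal constant, and proving this is the main obstacle. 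A direct computation shows that the eigenvalues of $\tilde H_q$ amount to comparing the quadratic forms $\oT_q + \la_q$ and $\oSt_q^*\oSt_q + \la_q$ on the Nyström subspace $\hh_{m_q} = \spn\{\phi(\ti{x}_{q,i})\}$; hence $\nu_q = O(1)$ once the subsampled and full local covariances are two-sidedly comparable there. I would establish this — essentially the FALKON preconditioner analysis of \cite{rudi2017falkon} transported to the present fixed-design, per-cell setting — through an operator Bernstein inequality whose variance proxy is governed by $\nn_{\infty,q}(\la_q)$; this is exactly where the assumption $\mq \ge 5[1+14\,\nn_{\infty,q}(\la_q)]\log(8\kappa^2/(\la_q\delta))$ is used, and it gives $2\big(\tfrac{\sqrt{\nu_q}-1}{\sqrt{\nu_q}+1}\big)^{t} \lesssim e^{-t/2}$ with probability at least $1-\delta$.

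It remains to bound the initial residual $\|\ti{\beta}_{q,*}\|_{\tilde H_q}$. As $\ti{\beta}_{q,*}$ minimizes $\mathcal{L}_q$, $\|\ti{\beta}_{q,*}\|_{\tilde H_q}^2 = \mathcal{L}_q(0) - \mathcal{L}_q(\ti{\beta}_{q,*}) \le \mathcal{L}_q(0) = \|Y_q\|_{L^2_q}^2$, and by the triangle inequality $\|Y_q\|_{L^2_q} \le \|\oS_q f_*\|_{L^2_q} + \|\eps_q\|_{L^2_q} \le \kappa\|\oP_q f_*\|_\hh + \|\eps_q\|_{L^2_q}$, using $\oS_q f_* = \oS_q\oP_q f_*$ and $\|\oT_q\| \le \kappa^2$ (Assumption~\ref{it:ka}). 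The noise contribution $\|\eps_q\|_{L^2_q}^2 = \tfrac1{\nq}\sum_{i\in[n]_q}\eps_i^2$ is controlled, under Assumption~\ref{it:eps}, by a standard sub-Gaussian concentration bound of order $\sigma^2\log(1/\delta)$ with probability at least $1-\delta$. Combining the three displays, taking a union bound over the two independent failure events (the preconditioner concentration and the noise bound), and tracking the numerical constants yields, with probability at least $1-2\delta$, $\|\oT_q^{1/2}(\ti{f}_{q,t}-\ti{f}_{q})\|_\hh \le 6\sigma\kappa\|\oP_q f_*\|_\hh\log(1/\delta)\,e^{-t/2}$, which is the claim. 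The only genuinely delicate step is the condition-number estimate of the middle paragraph; the rest is bookkeeping with the operator identities and elementary concentration.
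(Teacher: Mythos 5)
Your proposal is correct and follows essentially the same route as the paper: the paper proves this lemma by invoking Theorem~1 and Lemma~11 of \cite{rudi2017falkon} with the local operators $\oS_q,\oS_q^*,\oT_q$ in fixed design, together with the bound $\|Y_q\|/\sqrt{\nq}\lesssim \kappa\|\oP_q f_*\|_\hh+\sigma\log(1/\delta)$ obtained from \Cref{it:f*H}--\Cref{it:eps} and a sub-Gaussian concentration bound on $\|\wh{\epsilon}\|$, which is exactly your triangle-inequality / CG-rate / initial-residual decomposition with the FALKON preconditioner analysis supplying the $O(1)$ condition number under the stated lower bound on $\mq$ in terms of $\nn_{\infty,q}(\la_q)$. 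The only step you leave unproved (the condition-number estimate) is precisely the step the paper also delegates to \cite{rudi2017falkon}, so there is no substantive difference between the two arguments.
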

\begin{proof}
We follow the proof of Theorem~1 and Lemma~11 of \cite{rudi2017falkon},
replacing the operators $S, S^*, C$ in \cite{rudi2017falkon} with our local operators $S_q, S^*_q, T_q$.
Note that in fixed design we do not have population operators,
hence we can upper bound deterministically quantities that in random design require concentration arguments.
Moreover, we upper bound the quantity $\frac{\norm{Y_q}}{\sqrt{\nq}}$ (our equivalent of $\hat{\nu}$ in Theorem~1 of \cite{rudi2017falkon}) as follows.
Recalling \eqref{eq:reg}, we have
\begin{equation}\label{eq:normYq}
    \frac{\norm{Y_q}}{\sqrt{\nq}} = \frac{1}{\sqrt{\nq}} \sqrt{\sum_{i\in[n]_q}(f_*(x_i) + \epsilon_i)^2} \leq \frac{1}{\sqrt{\nq}} \left(\sqrt{2\sum_{i\in[n]_q}f_*(x_i)^2} +\sqrt{2\sum_{i\in[n]_q}\epsilon_i^2}\right) .
\end{equation}
Exploiting \Cref{it:f*H}, for every $x \in X_q$ we have $f_*(x) = \scal{f_*}{K_x} = \scal{P_qf_*}{K_x}$. Thus, \Cref{it:ka} gives
\begin{align}\label{eq:supfstar}
    \sup_{x \in X_q}|\scal{P_qf_*}{K_x}_\hh| 
    &\leq \sup_{x\in\X}\norm{P_qf_*}_\hh \norm{K_x}_\hh 
    \leq  \kappa \norm{P_qf_*}_\hh .
\end{align}
Let $\wh{\epsilon} = [\epsilon_i]_{i\in[n]_q} \in \R^\nq$,
Then, by \Cref{it:eps}, using Lemma~19 of \cite{klock2020estimating}
we obtain that, with probability at least $1-\delta$,
\begin{equation}\label{eq:normnoise}
    \norm{\wh{\epsilon}} \leq \sigma \sqrt{\nq} \log(\frac{1}{\delta}) .
\end{equation}
Plugging \eqref{eq:supfstar} and \eqref{eq:normnoise} in \eqref{eq:normYq} we conclude the proof.
\end{proof}

We now control the local excess risk of each local exact \Nystrom{} estimator $\ti{f}_{q}$ as defined in \eqref{eq:exact_nystrom_estimator}. 
Adapting the analysis of \cite{rudi2015less} locally to fixed design we derive the following lemma.
\begin{lemma}\label{lm:risk_nystrom_less}
    Let $\delta \in (0, 1]$, the \Nystrom{} centers in $\ti{f}_{q}$ be selected uniformly at random from $X_q$, $\nq, \mq, t \in \N$.
    If $0 \leq \la_q \leq \kappa^2$ and 
	\begin{align*}\mq \geq [2+ 3\nn_{\infty,q}(\la_q)]\log(\frac{8\kappa^2}{\la_q\delta}),
	\end{align*}
	then, with probability $1-2\delta$,
	\begin{align*}\rr_q ( \ti{f}_{q} )^{1/2} &\leq 3 \norm{P_qf_*}_\hh\sqrt{\la_q} + \frac{\sigma}{\sqrt{\nq}} \sqrt{\nn_q(\la_q) + \sqrt{\nn_q(\la_q)\log(\frac{1}{\delta})} + 2 \log(\frac{1}{\delta})}.	
	\end{align*}
\end{lemma}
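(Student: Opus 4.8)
The plan is to mirror the Nystr\"om ridge regression analysis of \cite{rudi2015less}, specialized to the fixed design setting and to a single cell $q$. By \Cref{lm:riskl2toh} we have $\rr_q(\ti{f}_q)^{1/2} = \| T_q^{1/2}(\ti{f}_q - P_q f_*) \|_\hh$, so it suffices to bound the latter. Writing the exact local Nystr\"om estimator \eqref{eq:exact_nystrom_estimator} in operator form, let $P_{\mq}$ be the orthogonal projection of $\hh$ onto $\widetilde{\hh}_q = \spn\{ \phi(\ti{x}_{q,i}) : i \in [\mq] \}$, so that $\ti{f}_q = ( P_{\mq} T_q P_{\mq} + \la_q )^{-1} P_{\mq} S_q^* Y_q$; substituting $Y_q = S_q f_* + \wh{\epsilon}$ and using $S_q = S_q P_q$ (which, by \Cref{it:f*H}, gives $S_q^* S_q f_* = T_q P_q f_*$) yields the decomposition
\begin{equation*}
 \ti{f}_q - P_q f_* = \bigl[ ( P_{\mq} T_q P_{\mq} + \la_q )^{-1} P_{\mq} T_q - I \bigr] P_q f_* \;+\; ( P_{\mq} T_q P_{\mq} + \la_q )^{-1} P_{\mq} S_q^* \wh{\epsilon} ,
\end{equation*}
the first summand being a \emph{bias} (deterministic given the Nystr\"om sample) and the second a \emph{variance} (linear in the noise). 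The proof then bounds the $T_q^{1/2}$-weighted norm of each and closes with a union bound.

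For the \emph{bias} I would follow \cite{rudi2015less}: since we are in fixed design there is no population covariance, so the only randomness is the uniform draw of $\ti{X}_q$. Comparing with the plain local ridge filter $g_q = (T_q + \la_q)^{-1} T_q P_q f_*$, for which the deterministic bound $\| T_q^{1/2}(g_q - P_q f_*) \|_\hh = \| \la_q T_q^{1/2}(T_q + \la_q)^{-1} P_q f_* \|_\hh \le \tfrac{1}{2} \sqrt{\la_q}\,\norm{P_q f_*}$ holds, the extra Nystr\"om error is governed by a quantity of the form $\beta_q(\la_q) = \| (T_q + \la_q)^{-1/2}(T_q - P_{\mq} T_q P_{\mq})(T_q + \la_q)^{-1/2} \|$. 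The operator Bernstein argument of \cite{rudi2015less}, with $\nn_{\infty, q}(\la_q)$ in the role of intrinsic dimension, shows that the stated lower bound on $\mq$ forces $\beta_q(\la_q)$ below a small constant with probability $1-\delta$, and algebra then gives $\| T_q^{1/2}(\ti{f}_q^{0} - P_q f_*) \|_\hh \le 3\,\norm{P_q f_*}\sqrt{\la_q}$, where $\ti{f}_q^{0} = (P_{\mq} T_q P_{\mq} + \la_q)^{-1} P_{\mq} T_q P_q f_*$ is the noiseless Nystr\"om estimator.

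For the \emph{variance}, inserting $(T_q + \la_q)^{\pm 1/2}$ between $P_{\mq}$ and $S_q^*$ bounds the weighted norm of the second summand by $\gamma_q\,\| (T_q + \la_q)^{-1/2} S_q^* \wh{\epsilon} \|_\hh$, where $\gamma_q = \| T_q^{1/2}(P_{\mq} T_q P_{\mq} + \la_q)^{-1} P_{\mq}(T_q + \la_q)^{1/2} \|$ is at most a constant close to $1$ on the same Nystr\"om event. With $S_q^* \wh{\epsilon} = \tfrac{1}{\nq}\sum_{i\in[n]_q} \wh{\epsilon}_i \phi(x_i)$, the remaining factor squared is a quadratic form $\wh{\epsilon}^\top \widetilde{M}_q \wh{\epsilon}$ whose matrix satisfies $\Tr(\widetilde{M}_q) = \tfrac{1}{\nq}\nn_q(\la_q)$, $\|\widetilde{M}_q\| \le \tfrac{1}{\nq}$ and $\|\widetilde{M}_q\|_F^2 \le \tfrac{1}{\nq^2}\nn_q(\la_q)$ (using only $\tfrac{1}{\nq}\sum_{i\in[n]_q}\phi(x_i)\otimes\phi(x_i) = T_q$ and $T_q(T_q+\la_q)^{-1} \preceq I$). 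Since by \Cref{it:eps} the $\wh{\epsilon}_i$ are i.i.d.\ sub-Gaussian of variance proxy $\sigma^2$, a Hanson--Wright / Bernstein tail bound for quadratic forms (of the kind used for \eqref{eq:normnoise}) gives, with probability $1-\delta$,
\begin{equation*}
 \| (T_q + \la_q)^{-1/2} S_q^* \wh{\epsilon} \|_\hh^2 \;\le\; \frac{\sigma^2}{\nq}\Bigl( \nn_q(\la_q) + \sqrt{\nn_q(\la_q)\log(\tfrac{1}{\delta})} + 2\log(\tfrac{1}{\delta}) \Bigr) .
\end{equation*}
Taking square roots and adding the bias bound on the intersection of the two events (probability at least $1-2\delta$) would yield the claim.

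I expect the main obstacle to be the bias term (together with the prefactor $\gamma_q$): one must check that the subsampling guarantee of \cite{rudi2015less} --- which controls the Nystr\"om quality $\beta_q(\la_q)$ via the effective dimension $\nn_{\infty, q}(\la_q)$ --- goes through with the \emph{empirical} local operators $T_q, S_q, S_q^*$ in place of the population ones (in fixed design this drops the covariance-concentration step, but the operator Bernstein bound is still needed), and that the whole computation respects the projection onto $\hh_q$, namely that $f_*$ enters only through $P_q f_*$ and that the pseudo-inverses live on $\widetilde{\hh}_q$. Pinning down the numerical constants so as to land exactly on the factor $3$ and on the stated coefficients of the variance term is then routine bookkeeping.
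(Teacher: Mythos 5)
Your proposal follows essentially the same route as the paper: the paper's proof simply adapts Theorem~2 and Proposition~2 of \cite{rudi2015less} to the fixed-design, local setting (replacing $S,S^*,C$ by $S_q,S_q^*,T_q$, with population-concentration steps becoming deterministic), and handles the sample/noise term $\tfrac{1}{\sqrt{\nq}}\|(T_q+\la_q)^{-1/2}S_q^*\wh{\epsilon}\|$ exactly as you do, via a sub-Gaussian quadratic-form tail bound (Remark~2.2 of \cite{hsu2012tail}) expressed through the local effective dimension $\nn_q(\la_q)$. Your explicit bias/variance operator decomposition and the operator-Bernstein control of the Nystr\"om projection via $\nn_{\infty,q}(\la_q)$ are precisely the ingredients implicit in the paper's reference to \cite{rudi2015less}, so the argument is correct and matches.
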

\begin{proof}
We follow the proof of Theorem~2 and Proposition~2 of \cite{rudi2015less},
replacing the operators $S, S^*, C$ in \cite{rudi2015less}
with our local operators $S_q, S^*_q, T_q$.
As for the proof of \Cref{lm:risk_as_opt_plus_risk},
concentration inequalities for empirical operators are replaced by deterministic bounds.
Further we need to control the sample error in Lemma~4 of \cite{rudi2015less} with a different concentration argument.
Let $\wh{Y}_q = \frac{\norm{Y_q}}{\sqrt{\nq}}$. The sample error in fixed design is
\begin{align*}
    \norm{(T_q + \la_q)^{1/2}S^*_q (\wh{Y}_q - S_q P_q f_*)}_\hh .
\end{align*}
In view of \eqref{eq:reg} we have
\begin{align*}
    \norm{(T_q + \la_q )^{1/2}S^*_q (\wh{Y}_q - S_q P_q f_*)}_\hh  = \frac{1}{\sqrt{\nq}}\norm{(T_q + \la_q )^{1/2}S^*_q (\wh{\epsilon})}_\hh  ,
\end{align*}
where $\wh{\epsilon} = [\epsilon_i]_{i\in[n]_q} \in \R^\nq$.
Now using \Cref{it:eps}, Remark~2.2 of \cite{hsu2012tail} and the definition of local effective dimension, we obtain, with probability at least $1-\delta$,
\begin{align*}
    \frac{1}{\sqrt{\nq}}\norm{(T_q + \la_q)^{1/2}S^*_q (\wh{\epsilon})}_\hh \leq 
    \frac{\sigma}{\sqrt{\nq}} \sqrt{\nn_q(\la_q) + \sqrt{\nn_q(\la_q)\log(\frac{1}{\delta})} + 2 \log(\frac{1}{\delta})} ,
\end{align*}
which concludes the proof.
\end{proof}

We are now ready to prove \Cref{thm:main}.
\paragraph{ Proof of \Cref{thm:main}}

From \Cref{lm:risk_as_opt_plus_risk,lm:risk_nystrom_less} we know that, under their respective assumptions and for a value of $\mq$ as in \eqref{eq:needed_m_falkon}, with probability $1-4\delta$,
\begin{align*}
    	\rr_q ( \ti{f}_{q,t} )^{1/2} 
    	\leq \ & 3 \norm{P_qf_*}_\hh\sqrt{\la_q} + \frac{\sigma}{\sqrt{\nq}} \sqrt{\nn_q(\la_q) + \sqrt{\nn_q(\la_q)\log(\frac{1}{\delta})} + 2 \log(\frac{1}{\delta})} \nonumber \\
    	&+ 6\sigma\kappa \norm{P_qf_*}_\hh\log(\frac{1}{\delta})	 e^{-\frac{t}{2}} .
\end{align*}
We consider now a number of iterations $t$ such that $6\sigma\kappa \norm{P_qf_*}_\hh\log(\frac{1}{\delta}) e^{-\frac{t}{2}} \leq \norm{P_q f_*}_\hh\sqrt{\la_q}$, that is
\begin{equation*}
    t \geq 2\log\left(\frac{6\sigma\kappa \log(1/\delta)}{\sqrt{\la_q}}\right).
\end{equation*}
Under the above constraint on $t$ we can rewrite the upper bound on the risk
\begin{align*}
    	\rr_q ( \ti{f}_{q,t} )^{1/2} 
    	\leq 4 &\norm{P_qf_*}_\hh\sqrt{\la_q} + \frac{\sigma}{\sqrt{\nq}} \sqrt{\nn_q(\la_q) + \sqrt{\nn_q(\la_q)\log(\frac{1}{\delta})} + 2 \log(\frac{1}{\delta})} .
\end{align*}
We can now collect the local excess risk bounds above for all $q\in[Q]$ using \Cref{lm:risk_sum_risk,lm:riskl2toh}, concluding the proof.
\hfill \qed

\subsection{Controlling the partition} \label{appx:partition}

In this section we prove Propositions \ref{prop:bias} and \ref{prop:variance}.
With a slight abuse of notation,
the operator norm on $\hh$ is denoted by $ \| \cdot \|_\hh $.

\begin{proof}[Proof of Proposition \ref{prop:bias}]
We have
$$
\sum_q \| \oP_q f_* \|_\hh^2
= \sum_q  \langle P_q f_* , P_q f_* \rangle_\hh
= \sum_q  \langle f_* , P_q f_* \rangle_\hh
= \langle f_* , \sum_q  P_q f_* \rangle_\hh
\le \| \sum_q  P_q \|_\hh \|f_*\|_\hh^2 .
$$
Now, let $ U_q : \hh \to \R^{n_q} $ such that
$ U_q^*U_q = \oP_q $, $ U_q U_q^* = I_{n_q} $,
and define
$$
U = [U_1, \dots, U_Q]^\top : \hh \to \R^n .
$$
Then $ \sum_q  P_q = U^* U $, and
$$
\| \sum_q  P_q \|_\hh
= \| U^* U \|_\hh
= \| UU^* \| ,
$$
Let $ W = UU^* $.
Then $ W \in \R^{n \times n} $ is built as
\begin{align*}
W = \begin{bmatrix}
U_1U_1^* & \cdots & U_1U_Q^*\\
\vdots & \ddots & \vdots \\
U_QU_1^* & \cdots & U_QU_Q^*
\end{bmatrix}
=
\begin{bmatrix}
I_{n_1} & \cdots & W_{1,Q}\\
\vdots & \ddots & \vdots \\
W_{Q,1} & \cdots & I_{n_Q}.
\end{bmatrix} .
\end{align*}
Thus, for $a = [a_1, \dots, a_Q] \in \R^n$,
$ a_q \in \R^{n_q} $, we have
\begin{align*}
\| UU^* \|
&= \|W\| \\
&= \| I + (W-I) \| \\
&= 1 + \la_{\max}(W-I) \\
&= 1 + \max_{\|a\|=1} a^\top (W - I) a\\
&= 1 + \max_{\|a\|=1}
\sum_{q}  a_q^\top (W_{q,q} - I_{n_q}) a_q
+ \sum_{q,k: q \neq k} a_q^\top (W_{q,k}-0)  a_k\\
&= 1 + \max_{\|a\|=1}
\sum_{q}  a_q^\top 0 a_q
+ \sum_{q,k: q \neq k} a_q^\top W_{q,k}  a_k\\
&= 1 + \max_{\|a\|=1} \sum_{q,k: q \neq k} a_q^\top W_{q,k}  a_k\\
&\leq 1 + \sum_{q,k: q \neq k} \max_{\|a\|=1} a_q^\top W_{q,k}  a_k .
\end{align*}
Now we can bound
\begin{align*}
\sum_{q,k : q \neq k} \max_{\|a\|=1} a_q^\top W_{q,k} a_k
\leq Q^2 \max_{q,k: q \neq k} \max_{\|a\|=1} a_q^\top W_{q,k} a_k,
\end{align*}
and reparameterizing $a_q = \beta_q b_q $,
$ \beta_q \geq 0 $, $ b_q \in \R^{n_q}$, we get
\begin{align*}
 \max_{\|a\|=1} a_q^\top W_{q,k} a_k
 & = \max_{ \substack{ \|b_1\|=\cdots=\|b_Q\|=1 \\ \beta_1^2 + \cdots + \beta_Q^2 = 1 } } \beta_q \beta_k b_q^\top W_{q,k} b_k \\
 & = \max_{ \beta_1^2 + \cdots + \beta_Q^2 = 1 } \beta_q \beta_k
 \max_{ \|b_1\|=\cdots=\|b_Q\|=1 } b_q^\top W_{q,k} b_k \\
 & = \max_{ \beta_1^2 + \cdots + \beta_Q^2 = 1 } \beta_q \beta_k
 \max_{ \|b_q\|=\|b_k\|=1 } b_q^\top W_{q,k} b_k \\
 & = \max_{ \beta_1^2 + \cdots + \beta_Q^2 = 1 } \beta_q \beta_k
 \cos(\angle(\hh_q,\hh_k)) \\
 & \le \cos(\theta) .
\end{align*}
Putting all together, we finally obtain
\begin{equation*}
\| \sum_q  P_q \|_\hh
\leq 1 + Q^2\cos(\theta) ,
\end{equation*}
which completes the proof.
\end{proof}

\begin{proof}[Proof of Proposition \ref{prop:variance}]
Let $ \ti{\oT}_q = \oP_q \oT \oP_q \rho_q^{-1} $, and let $ \oM_q = ( \ti{\oT}_q + \la_q )^{1/2} ( \oT_q + \la_q )^{-1} ( \ti{\oT}_q + \la_q )^{1/2} $.
Then
\begin{align*}
 \sum_q \nn_q(\la_q)
 & = \sum_q \Tr ( \oS_q ( \oT_q + \la_q )^{-1} \oS_q^* ) \\
 & = \sum_q \Tr ( \oS_q ( \ti{\oT}_q + \la_q )^{-1/2} \oM_q ( \ti{\oT}_q + \la_q )^{-1/2} \oS_q^* ) \\
 & = \sum_q \Tr ( \oM_q ( \ti{\oT}_q + \la_q )^{-1/2} \oS_q^* \oS_q ( \ti{\oT}_q + \la_q )^{-1/2} ) \\
 & \le \sup_q \| \oM_q \|_\hh \sum_q \Tr ( ( \ti{\oT}_q + \la_q )^{-1/2} \oS_q^* \oS_q ( \ti{\oT}_q + \la_q )^{-1/2} ) \\
 & = \sup_q \| \oM_q \|_\hh \sum_q \Tr ( \oS_q ( \ti{\oT}_q + \la_q )^{-1} \oS_q^* ) ,
\end{align*}
where in the third and last equalities we used the cyclic property of the trace,
and in the fourth step we applied Holder’s inequality.
We first bound the trace. We have
\begin{align*}
 ( \ti{\oT}_q + \la_q )^{-1}
 & = \la_q^{-1} ( \ti{\oT}_q + \la_q - \ti{\oT}_q ) ( \ti{\oT}_q + \la_q )^{-1} \\
 & = \la_q^{-1} ( \oI - \ti{\oT}_q ( \ti{\oT}_q + \la_q )^{-1} ) \\
 & = \la_q^{-1} ( \oI - \oP_q \oS^* \oS \oP_q \rho_q^{-1} ( \oP_q \oS^* \oS \oP_q \rho_q^{-1} + \la_q )^{-1} ) \\
 & = \la_q^{-1} ( \oI - \oP_q \oS^* \oS \oP_q ( \oP_q \oS^* \oS \oP_q + \la_q \rho_q )^{-1} ) \\
 & = \la_q^{-1} ( \oI - \oP_q \oS^* ( \oS \oP_q \oS^* + \la_q \rho_q )^{-1} \oS \oP_q ) \\
 & \preceq \la_q^{-1} ( \oI - \oP_q \oS^* ( \oS \oS^* + \la_q \rho_q )^{-1} \oS \oP_q ) .
\end{align*}
where the fifth equality follows from
the Woodbury identity.
Thus, multiplying by $S_q$ from the left
and by $S_q^*$ from the right, we get
\begin{align*}
 \oS_q ( \ti{\oT}_q + \la_q )^{-1} \oS_q^*
 & \preceq \la_q^{-1} ( \oS_q  \oS_q^* - \oS_q \oP_q \oS^* ( \oS\oS^* + \la_q \rho_q )^{-1} \oS \oP_q \oS_q^* ) \\
 & = \la_q^{-1} ( \oS_q  \oS_q^* - \oS_q \oS^* ( \oS\oS^* + \la_q \rho_q )^{-1} \oS \oS_q^* ) \\
 & = \la_q^{-1} ( \oS_q  ( \oI - \oS^* ( \oS\oS^* + \la_q \rho_q )^{-1} \oS ) \oS_q^* ) \\
 & = \la_q^{-1} ( \oS_q  ( \oI - ( \oT + \la_q \rho_q )^{-1} \oT ) \oS_q^* ) \\
 & = \la_q^{-1} ( \oS_q  ( \oT + \la_q \rho_q )^{-1} ( \oT + \la_q \rho_q - \oT ) \oS_q^* ) \\
 & = \oS_q ( \oT + \la_q \rho_q )^{-1} \oS_q^* \rho_q ,
\end{align*}
where again the fourth equality follows from the Woodbury identity.
Therefore,
$$
 \Tr ( \oS_q ( \ti{\oT}_q + \la_q )^{-1} \oS_q^* )
 \le \Tr ( \oS_q ( \oT + \la_q \rho_q )^{-1} \oS_q^* \rho_q )
 = \Tr ( ( \oT + \la_q \rho_q )^{-1} \oT_q \rho_q ) .
$$
Setting $ \la_q = \la \rho_q^{-1} $ and using \eqref{eq:totalcov} we obtain
$$
 \sum_q \Tr ( \oS_q ( \ti{\oT}_q + \la_q )^{-1} \oS_q^* )
 \le \Tr ( ( \oT + \la )^{-1} \sum_q \oT_q \rho_q )
 = \Tr ( ( \oT + \la )^{-1} \oT )
 = \nn(\la) .
$$
We next bound $ \| \oM_q \|_\hh $.
The operators $ \oT_q + \la_q $ and $ \ti{\oT}_q + \la_q $ are invertible,
hence $ \oM_q $ shares the same spectrum as $ ( \oT_q + \la_q )^{-1/2} ( \ti{\oT}_q + \la_q ) ( \oT_q + \la_q )^{-1/2} $,
and in particular
$$
 \| \oM_q \|_\hh = \| ( \oT_q + \la_q )^{-1/2} ( \ti{\oT}_q + \la_q ) ( \oT_q + \la_q )^{-1/2} \|_\hh .
$$
Now, let
$$
 \ol{\oT}_q : \hh \to \hh \qquad \ol{\oT}_q = \frac{1}{n} \sum_{\phi(x_i) \notin V_q} \phi(x_i) \otimes \phi(x_i) .
$$
Then $ \oT = \oT_q \rho_q + \ol{\oT}_q $, and
$$
 \ti{\oT}_q = \oP_q \oT_q \oP_q + \oP_q \ol{\oT}_q \oP_q \rho_q^{-1}
 = \oT_q + \oP_q \ol{\oT}_q \oP_q \rho_q^{-1} .
$$
Therefore,
\begin{align*}
 \| \oM_q \|_\hh
 & = \| ( \oT_q + \la_q )^{-1/2} ( \oT_q + \la_q + \oP_q \ol{\oT}_q \oP_q \rho_q^{-1} ) ( \oT_q + \la_q )^{-1/2} \|_\hh \\
 & = \| \oI + \rho_q^{-1} ( \oT_q + \la_q )^{-1/2} ( \oP_q \ol{\oT}_q \oP_q ) ( \oT_q + \la_q )^{-1/2} \|_\hh \\
 & \le 1 + \rho_q^{-1} \| \oT_q + \la_q )^{-1/2} \|_\hh \| \oP_q \ol{\oT}_q \oP_q \|_\hh \| \oT_q + \la_q )^{-1/2} \|_\hh \\
 & \le 1 + \rho_q^{-1} \la_q^{-1/2} \| \oP_q \ol{\oT}_q \oP_q \|_\hh \la_q^{-1/2} \\
 & = 1 + \tfrac{1}{\la_q\rho_q} \| \oP_q \ol{\oT}_q \oP_q \|_\hh .
\end{align*}
For $ \la_q = \la \rho_q^{-1} $, we get
$$
 \| \oM_q \|_\hh \le 1 + \tfrac{ 1 }{ \la } \| \oP_q \ol{\oT}_q \oP_q \|_\hh .
$$
Finally,
\begin{align*}
 \| \oP_q \ol{\oT}_q \oP_q \|_\hh
 & \le \frac{1}{n} \sum_{\phi(x_i) \notin V_q} \| \oP_q \phi(x_i) \|_\hh^2 \\
 & = \frac{1}{n} \sum_{\phi(x_i) \notin V_q} \| \phi(x_i) \|_\hh^2 \cos^2 ( \angle( \phi(x_i) , \hh_q ) ) \\
 & \le \frac{1}{n} \sum_{\phi(x_i) \notin V_q} \sup_i \| \phi(x_i) \|_\hh^2 \cos^2 ( \min_{k\ne q} \angle( V_q , \hh_q ) ) \\
 & = \sup_i \| \phi(x_i) \|_\hh^2 \cos^2 ( \min_{k\ne q} \angle( V_q , \hh_q ) ) \frac{1}{n} (n-n_q) \\
 & \le \sup_i \| \phi(x_i) \|_\hh^2 \cos^2 ( \min_{k\ne q} \angle( V_q , \hh_q ) ) ,
\end{align*}
which leads to the desired bound.
\end{proof}
 
\end{document}